
\documentclass[10pt,journal,compsoc]{IEEEtran}
%

\usepackage{amsmath, amssymb, amsthm, amsfonts}
\usepackage[linkcolor=blue,colorlinks=true,citecolor=blue]{hyperref}
\usepackage{graphicx}
\usepackage[caption=false,font=footnotesize]{subfig}
\usepackage{multirow}

\usepackage{tikz}
\usetikzlibrary{arrows}

\newcommand{\machine}[1]{$\mathcal{M}_{#1}$}
\newcommand{\vect}[1]{\mathbf{#1}}

\newtheorem{theorem}{Theorem}
\newtheorem{definition} {Definition}
\newtheorem{lemma}{Lemma}

\newtheorem{remark}{Remark}


\tikzstyle{vertex}=[circle,fill=black,minimum size=5pt,inner sep=0pt]
\tikzstyle{selected vertex} = [vertex, fill=red!24]
\tikzstyle{edge} = [draw,thick,-,black!40,line width=2pt]
\tikzstyle{weight} = [font=\small]
\tikzstyle{selected edge} = [draw,line width=1pt,-,black]
\tikzstyle{ignored edge} = [draw,line width=5pt,-,black!20]
\tikzstyle{false edge} = [draw,thick,-,red!50,line width=2pt]



%

%
\ifCLASSOPTIONcompsoc
  \usepackage[nocompress]{cite}
\else
  \usepackage{cite}
\fi
%

%
\ifCLASSINFOpdf
\else
\fi
\hyphenation{op-tical net-works semi-conduc-tor}

\begin{document}
%
\title{Learning of Tree-Structured Gaussian Graphical Models on Distributed Data under Communication Constraints}
%
%
%
%

\author{Mostafa~Tavassolipour,
        Seyed~Abolfazl~Motahari,
        and~Mohammad-Taghi~Manzuri~Shalmani
\IEEEcompsocitemizethanks{\IEEEcompsocthanksitem M. Tavassolipour, S. A. Motahari, and M. T. Manzuri Shalmani are with the Department
	of Computer Engineering, Sharif University of Technology, Tehran, Iran.\protect\\
}}

\IEEEtitleabstractindextext{%
\begin{abstract}
In this paper, learning of tree-structured Gaussian graphical models from distributed data is addressed. In our model,  samples are stored in a set of distributed machines where each machine has access to only a subset of features. A central machine is then responsible for learning the structure based on received messages from the other nodes.  We present a set of communication efficient strategies, which are theoretically proved to convey sufficient information for reliable learning of the structure. In particular, our analyses show that even if each machine sends only the signs of its local data samples to the central node, the tree structure can still be recovered with high accuracy. Our simulation results on both synthetic and real-world datasets show that our strategies achieve a desired accuracy in inferring the underlying structure, while spending a small budget on communication. 
\end{abstract}

\begin{IEEEkeywords}
Structure learning, Chow-Liu algorithm, Gaussian Graphical Model.
\end{IEEEkeywords}}

\maketitle

\IEEEdisplaynontitleabstractindextext

%
\IEEEpeerreviewmaketitle

\IEEEraisesectionheading{\section{Introduction}\label{sec:introduction}}
\IEEEPARstart{M}{any} modern systems acquire data at several repositories which are stored at different locations. In many situations, it is impossible to transfer the distributed data completely to a central machine due to communication constraints. Designing communication-efficient learning algorithms is desired to transfer enough information from repositories to the central machine and to reliably infer the learning model.

Many learning algorithms can be modified to run distributively at several machines to perform a learning task. There are many papers that propose distributed (parallel) version of various learning algorithms \cite{boyd2011distributed, forero2010consensus, tavassolipour2017learning,meng2014marginal}. However, some learning algorithms could not be efficiently parallelized on distributed data. For example, when each local machine has access to some attributes (dimensions) of data samples, many learning algorithms could not be  run distributively. In such situations, typically, there exists a central machine which is responsible for running the learning algorithm. Due to communication constraints, local machines could not transmit their whole datasets to the central machine. In fact, the central machine may have access to a lossy compressed version or a subset of the original data. Thus, designing and analysis of the learning algorithms to make an appropriate trade-off between accuracy and the amount of communication is of great importance.


In general, three models can be considered for data in distributed settings: horizontal model, where the data are distributed across samples; vertical model, where data are distributed over dimensions; and hybrid model which is the combination of both horizontal and vertical models. Designing distributed learning systems for the horizontal model has been addressed in many papers \cite{tavassolipour2017learning, jordan2018communication}. In \cite{amari1998statistical} and \cite{el-Gamal2017rate}, the vertical model is studied from information theoretic point of view. This paper, in fact, addresses structure learning of tree-structured Gaussian Graphical Models (GGM) in the case of vertical model.

A GGM is a Markov Random Field (MRF) with normal variables which indeed form a joint normal distribution with mean $\vect{\mu}$ and covariance matrix $Q$. If the $ij$-th component of $Q^{-1}$ is zero, then the variables $i$ and $j$ are conditionally independent given the other variables. From this fact, one can construct the structure of the GGM by connecting node $i$ to $j$ iff the $ij$-th component of $Q^{-1}$ is non-zero. GGMs are widely used in many applications such as gene regulatory networks \cite{irrthum2010inferring, friedman2000using, friedman2004inferring}, brain connectivity learning \cite{huang2010learning}, etc.

In this paper, we analyze and propose structure learning methods based on the Chow-Liu algorithm over distributed data \cite{chow1968approximating}. We assume that the data are split across dimensions (vertical model) among multiple machines. It worths mentioning that, unlike the horizontal model, the local machines are not capable of summarizing any statistics reflecting dependencies between dimensions in the vertical model. Hence, any inference requires some amount of communication between machines. This makes the problem nontrivial and  challenging.
We have assumed that the local machines are connected to a central machine over communication limited channels. Each machine compresses and transmits its local dataset to the central machine. Finally, the central machine by applying the Chow-Liu algorithm on the received distorted data, estimates the structure of the underlying GGM.

Interestingly, we convey an important message regarding accuracy of structure learning on distributed data for tree-structured GGMs: spending few bits per symbol is sufficient to transmit enough information to the central machine for the purpose of estimating the structure with high accuracy. We justify this result by rigorous analysis and numerical experiments on synthetic and real datasets. 

The paper is organized as follows. Section \ref{sec:related work} provides a comprehensive literature review on structure learning and distributed statistical inference. In Section \ref{sec:problem statement}, we describe the problem in great detail and present our main contributions. Section \ref{sec:structure learning using binary} is devoted to structure learning using signs of the Gaussian data with theoretical analyses on its estimation error. In Section \ref{sec:correlation estimation}, a per-symbol quantization  scheme is proposed and analyzed. We present the results of some experiments on real and synthetic datasets in Section~\ref{sec:experiments}. Section ~\ref{sec:conclusion} concludes the paper. 

\section{Related Work} \label{sec:related work}
\subsection{Structure Learning} \label{sec:structure learning methods}

In the context of graphical models, inferring the underlying graph structure from data samples is of great importance. The structure learning is a \emph{model selection} problem in which one estimates the underlying graph from i.i.d. samples  drawn from some MRFs or Bayesian networks. The structure learning plays an important role in many applications such as reconstructing gene regulatory networks from gene expressions \cite{chai2014review, hecker2009gene}, brain connectivity learning \cite{huang2010learning}, relationship analysis in social networks \cite{xiang2010modeling}, etc. 


Structure learning of GGMs is equivalent to recovering the support (fill pattern) of the inverse covariance matrix (concentration matrix). The sparse structure estimation of the concentration matrix is discussed in many works \cite{huang2013sparse,friedman2008sparse, banerjee2008model, danaher2014joint}. Among the sparse methods, maximizing the $\ell_1$-regularized likelihood is the most popular one. \cite{friedman2008sparse} and \cite{banerjee2008model} proposed the graphical lasso (\emph{glasso}), which finds the ML concentration matrix by an $\ell_1$ regularization term. There are some coordinate-descent algorithms for solving the glasso problem \cite{friedman2008sparse,  mazumder2012graphical}. More recently, Hsieh et al. \cite{hsieh2014quic} proposed a new algorithm for solving the glasso problem enjoying super linear convergence rate. The consistency of the estimated graph using the glasso for high dimensional problems is studied in \cite{ravikumar2011high}.


GGMs have an interesting property: one can obtain the neighborhood of each node by solving a linear regression problem for the corresponding variable on other variables. This approach of structure recovering is also known as \emph{neighborhood selection} in the literature. For the sparse structures, combination of $\ell_1$ regularization and the method of neighborhood selection is studied in \cite{meinshausen2006high}. In this method, an $\ell_1$ regularized linear regression problem is solved separately for each node. Such a method may lead to inconsistencies between the inferred neighborhoods. Chen et al. \cite{chen2014selection} proposed some rules to resolve the inconsistencies. 




Tan et al. \cite{tan2011learning} measured the complexity of the tree structures in view of  the Chow-Liu algorithm. They also provided the analysis of the error exponent of the Chow-Liu algorithm on tree-structured GGMs in  \cite{tan2010learning}. Moreover, they discussed the extreme structures which yield the best and worst error exponents. Resembling some features of this paper, the results in \cite{tan2010learning} differ from our results in two major aspects. First, our analysis is non-asymptotic while theirs is asymptotic in the number of samples. Second, we address the distributed version of the problem where only quantized data is available at the central machine. Having limited  access to the original data poses a significant challenge in the design and analysis.   

\subsection{Distributed Statistical Inference}

The early works on distributed parameter estimation mainly focused on the asymptotic analysis of error exponents for given bit rates (see \cite{amari1998statistical} and refs therein). More recently, studies are focused on characterizing the dependence between estimation performance and the communication constraint (see \cite{xu2017information, zhang2013information, duchi2014optimality,garg2014communication,braverman2016communication}). For example, Zhang et. al. \cite{zhang2013information} and Duchi et. al. \cite{duchi2014optimality} obtained some lower bounds on the minimax risks for distributed statistical parameter estimation under a given communication budget. They have studied the problem under single-round (non-interactive) and multiple-round (interactive) communication protocols between the local machines and the central one.  A similar problem is addressed in some other papers such as \cite{luo2005universal} and \cite{xu2017information}. Luo \cite{luo2005universal} showed that if each machine has a single one dimensional sample and transmits only one bit to the central machine, one can achieve the centralized minimax rate up to a constant factor for some specific  problems.  In a more recent work, Xu and Raginsky in \cite{xu2017information} obtained lower bounds on Bayes risk in estimating parameters in a similar distributed setting. They studied the problem under both interactive and non-interactive communication protocols.



Some basic problems in machine learning such as classification, regression, hypothesis testing, etc. in distributed fashion are studied in \cite{raginsky2007learning, amari2011optimal,tavassolipour2017learning}.
Raginsky in \cite{raginsky2007learning} studied the classification and regression problem in distributed settings. He obtained an information-theoretic characterization of achievable predictor performance. He evaluated the results on non-parametric regression with Gaussian noise. The distributed hypothesis testing  is studied by Amari \cite{amari2011optimal}  where a central machine makes decision on the correlation coefficient of two sequences stored in two different machines.

In this paper, we  focus on the problem of distributed tree-structured GGM learning which is not studied previously. This work is similar to the problems studied by Ahlswede \cite{ahlswede1990minimax} and El-gamal \cite{el-Gamal2017rate} due to the fact that each local machine cannot estimate the parameters without any communication with other machines. This is in contrast to  the most of the mentioned works where the local machines can have their own estimate of the underlying parameters. This fact makes the problem challenging as the local machines communicate with the central machine blindly. 


%

\section{Problem Statement and Proposed Methods} \label{sec:problem statement}
We are given $n$ i.i.d. random vectors $\{\vect x^{(1)}, \cdots, \vect x^{(n)}\}$ drawn from a $d$-dimensional zero mean normal distribution $\mathcal{N}(\vect 0, Q)$. Assume that the normal distribution can be factorized according to a tree model $\mathcal{T}=(\mathcal{V}, \mathcal{E})$ where $\mathcal{V}=\{1, \cdots, d\}$ is the set of nodes and $\mathcal{E}$ is the set of edges. Factorization according to $\mathcal{T}$ means that $(Q^{-1})_{jk}\neq 0$ if and only if $(j,k) \in \mathcal{E}$. Our goal is to find the structure of $\mathcal{T}$ in a situation where data is stored in $M$ machines such that each machine possesses some dimensions of the sample vectors. All machines are connected to a central machine via some communication limited links. This limitation makes it impossible for a machine to communicate its local dataset without any distortion to the central machine.

In this paper, we aim at proposing a communication efficient algorithm for estimating the underlying tree structure. In this setting, each machine transmits some information from its local dataset to the central machine which is responsible for estimating the structure from the received data.

Without loss of generality, we assume that the underlying normal distribution has zero mean and unit variance for all dimensions (i.e. $Q_{jj} = 1$).  For convenience, we also assume that the machine \machine{j} contains the $j$-th dimension of the sample vectors. In this way, the number of machines is equal to the dimensionality of the normal distribution (i.e. $M = d$). We denote the $j$-th dimension of $i$-th sample by $x_j^{(i)}$, i.e. $\vect x^{(i)} = [x^{(i)}_1, \cdots, x^{(i)}_d]^T$. Therefore, the local dataset on machine \machine{j} is $\{x^{(1)}_j, \cdots, x^{(n)}_j\}$. Throughout the paper, we denote the quantized (compressed) version of $x_j^{(i)}$ by $u_j^{(i)}$.

We assume that the communication budget is $R$ bits for each $x_j^{(i)}$. Thus, the overall communication cost for transmitting local datasets to the central machine is $n d R$ bits. 

The overall block diagram of our system is depicted in \figurename{}~\ref{fig:whole system}. In this system, \machine{j} encodes (quantizes) its local dataset using an $R$-bit encoder that can be represented by a function $\psi_j$ which maps the samples to one of the predefined  $2^{nR}$ reconstruction points denoted by $(u_j^{(1)},\cdots,u_j^{(n)})$ , i.e.
$
(u_j^{(1)},\cdots,u_j^{(n)}) = \psi_j(x_j^{(1)},\cdots,x_j^{(n)}).
$
Since $(u_j^{(1)},\cdots,u_j^{(n)})$ can take only one of the $2^{nR}$ different reconstruction points, it can be transmitted to the central machine with $nR$ bits. We assume that all machines incorporate the same encoding strategy. In this way, we have one encoder which is denoted by $\psi$.

\begin{remark}
	The quantized datasets received by the central machine are not distributed according to the normal distribution. Moreover, in general, the tree structure is no longer a property of the new distribution. These facts make the recovery of the structure a rather challenging problem.
\end{remark}

\begin{figure}[t]
	\centering
	\includegraphics[scale=.88]{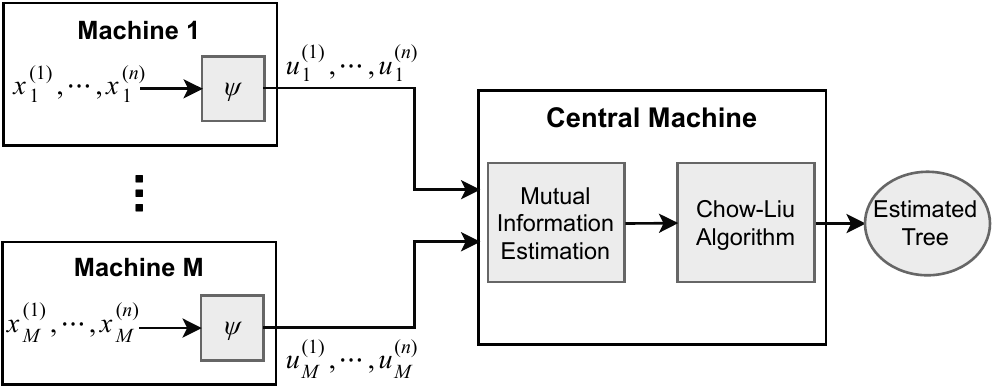}
	\caption{The overall block diagram of our proposed model.}
	\label{fig:whole system}
\end{figure}

In case of having access to the original datasets, the central machine can run the Chow-Liu algorithm in \cite{chow1968approximating} which gives the maximum likelihood (ML) tree structure \cite{tan2010learning}. In the Chow-Liu algorithm, the mutual information between any pair of vertices are estimated and used as the edges weights of a complete graph between vertices. The maximum weight spanning tree (MWST) gives the ML tree. Therefore, the central part of the Chow-Liu algorithm is to estimate the mutual information efficiently.

In graph theory, there are two efficient algorithms for solving the MWST problem: Kruskal \cite{kruskal1956shortest} and Prim \cite{prim1957shortest} algorithms. Throughout this paper we incorporate the Kruskal algorithm for finding the MWST. In Kruskal algorithm, the edges weighs are sorted in descending order and at each step an edge with the highest weight which does not form a cycle is added to the current forest. This algorithm continues until all vertices are covered. The output of this algorithm depends only on the order of edges weights.

In GGMs, the mutual information between any pair of variables, say $x_j$ and $x_k$, is obtained by
\begin{equation} \label{eq:mutual infromation normal}
I(x_j;x_k) = -\frac{1}{2} \ln \left(1-\rho_{jk}^2\right),
\end{equation}
where $\rho_{jk}$ is the correlation coefficient between $x_j$ and $x_k$.  According to \eqref{eq:mutual infromation normal}, one way to estimate the mutual information of two normal variables is to estimate their correlation coefficients first. In our problem setting where each variable is stored in a different machine, estimation of the correlation coefficients is a difficult task. This is due to the fact that  one needs to calculate the statistic $\sum_i x_j^{(i)} x_k^{(i)}$. Computing such a statistic needs transmission of  $x_j$ and $x_k$ samples to the central machine imposing   high communication cost which is prohibitive in band-width limited networks.

\subsection{Proposed Methods} \label{sec:approaches}

Based on our system model, we need to design some efficient ways that machines communicate with the central machine and to implement an algorithm to recover the tree structure at the central machine. We propose two techniques to achieve these goals which are described as follow. 

\subsubsection*{\textbf{Sign Method}}
Each data point is quantized to a single bit by the $\mathrm{sign}$ function, i.e.  
$$
\left(\mathrm{sign}(x_j^{(1)}),\cdots,\mathrm{sign}(x_j^{(n)})\right) = \psi\left(x_j^{(1)},\cdots,x_j^{(n)}\right).
$$
Since the encoder maps each data point to a single reconstruction point independent of the other points, i.e. $u_j^{(i)} = \mathrm{sign}(x_j^{(i)})$, the received data at the central machine is an i.i.d. sequence. However, as it mentioned earlier, the received data does not possess the tree structure of the original data. 

Even though the Chow-Liu algorithm is only applicable for reconstructing tree structures, the central machine uses it to infer ``a tree''  structure embedded in the new distribution. This scheme is called \emph{sign method}.   

Through analysis presented in Section \ref{sec:structure learning using binary} we show that the tree reconstructed by the sign method is the true underlying structure with high probability. Our simulation results also support our analysis.




\subsubsection*{\textbf{Per-symbol Quantization}}
In this scheme, each machine quantizes its data points to $2^R$ possible reconstruction points independent of the other points.   Therefore, the received quantized points are i.i.d. and non-Gaussian.  However, at the central machine, the distribution is assumed to be normal and the correlations are estimated based on the received quantized datasets. 

It worth mentioning that the encoder part of the sign method is a special case of the encoder used here with $R=1$. In contrary, the tree reconstruction algorithms used at the central machine are different. 

We provide an upper bound on the error of the correlation estimation in this case in Section \ref{sec:correlation estimation}. Simulation results also indicate by consuming a few bits for quantization, the estimated structure is often same as the structure obtained by the original data.

\section{Structure Learning with Signs} \label{sec:structure learning using binary}
In the sign method, the machine \machine{j} transmits $u^{(i)}_j = \mathrm{sign}\big(x^{(i)}_j \big)$ for $ i=1, \cdots, n$ to the central machine. Note that since $x_j \sim \mathcal{N}(0, 1)$, $u_j$ is a uniform Bernoulli variable over $\{-1, +1\}$. 

The central machine receives the binary data from all machines to form the quantized dataset $\{\vect u^{(1)}, \cdots, \vect u^{(n)}\}$ where $\vect u^{(i)} \in \{-1,+1\}^d$. Since the dimensions of the original normal vector $\vect x$ are correlated, dimensions of $\vect u$ are dependent as well. Although there is a simple map between the original normal vector and the signs, no closed form probability mass function (pmf) exists for $d \geq 4$. However, some approximations with desirable accuracies are proposed in \cite{bacon1963approximations}. 

Applying  the Chow-Liu algorithm on $\{\vect u^{(1)}, \cdots, \vect u^{(n)}\}$, the central machine obtains an estimate of the underlying structure. The remainder of this section is devoted to analyze the probability of incorrect structure recovery in a non-asymptotic regime. 

\subsection{Order Preserving of Mutual Information}
Essential to the analysis, we show that the order of the true mutual information values between variables remains the same after applying the sign function. In this way, one can claim that by reliable  estimating of the mutual information values of the signs, the true structure can be recovered using the Chow-Liu algorithm.

First, note that if $x_j\sim \mathcal{N}(0,1)$ and $x_k\sim \mathcal{N}(0,1)$ are  jointly normal with the correlation coefficient  $\rho_{jk}$, then the joint pmf of the corresponding signs $u_j$ and $u_k$ can be expressed as \cite{bacon1963approximations}
\begin{equation} \label{eq:binary pmf}
\begin{array}{c|cc}
u_j \backslash u_k & -1 & +1	\\
\hline
-1 & \theta_{jk}/2  & (1-\theta_{jk})/2	\\
+1 & (1-\theta_{jk})/2 & \theta_{jk}/2	
\end{array}
\end{equation}
where 
\begin{equation} \label{eq: theta}
\theta_{jk} = \frac{1}{2} + \frac{\arcsin(\rho_{jk})}{\pi}.
\end{equation}
Therefore, the mutual information between $u_j$ and $u_k$ can be written as  
\begin{equation} \label{eq:mutual information binary}
I(u_j; u_k) = 1 - h(\theta_{jk}),
\end{equation}
where $h(.)$ is the \emph{binary entropy function} given by
\begin{equation} \label{eq:binary entropy function}
h(\theta) = -\theta \log(\theta) - (1 - \theta) \log(1 - \theta).
\end{equation}

	

	

\begin{lemma}\label{lemma:mutual inf binary}
The sign function is an order preserving of the mutual information on Gaussian random variables. 
\end{lemma}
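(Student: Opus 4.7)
The plan is to show both $I(x_j;x_k)$ and $I(u_j;u_k)$ are strictly increasing functions of the single scalar $|\rho_{jk}|$; once this is established, any ordering of pairs by one mutual information is automatically matched by the other, which is precisely what order preservation means in the context of Chow--Liu (the algorithm depends only on the relative ordering of edge weights).

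First I would handle the easy direction. From \eqref{eq:mutual infromation normal}, $I(x_j;x_k) = -\tfrac{1}{2}\ln(1-\rho_{jk}^2)$ is manifestly a strictly increasing function of $\rho_{jk}^2$, hence of $|\rho_{jk}|$ on $[0,1)$. Next I would analyze $I(u_j;u_k) = 1-h(\theta_{jk})$. The binary entropy $h$ is strictly concave and symmetric about $1/2$, attaining its maximum there, so $1-h(\theta)$ is a strictly increasing function of $|\theta-1/2|$. From \eqref{eq: theta},
\begin{equation*}
\Bigl|\theta_{jk}-\tfrac{1}{2}\Bigr| = \frac{|\arcsin(\rho_{jk})|}{\pi} = \frac{\arcsin(|\rho_{jk}|)}{\pi},
\end{equation*}
using the oddness of $\arcsin$. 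Since $\arcsin$ is strictly increasing on $[0,1]$, the right-hand side is strictly increasing in $|\rho_{jk}|$. Composing, $I(u_j;u_k)$ is a strictly increasing function of $|\rho_{jk}|$.

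Combining the two monotonicities gives the lemma: for any two pairs $(j,k)$ and $(l,m)$,
\begin{equation*}
I(x_j;x_k) \ge I(x_l;x_m) \iff |\rho_{jk}| \ge |\rho_{lm}| \iff I(u_j;u_k) \ge I(u_l;u_m).
\end{equation*}

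I do not expect any real obstacle here; the statement reduces to a short chain of monotonicity arguments once one observes that both mutual informations depend on $(x_j,x_k)$ only through $|\rho_{jk}|$. The only place worth being careful is the symmetry step $|\arcsin(\rho)| = \arcsin(|\rho|)$ and the use of the symmetry of $h$ about $1/2$, both of which are elementary. No assumption beyond the standard normal marginals (already adopted in Section~\ref{sec:problem statement}) is needed.
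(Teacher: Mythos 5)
Your proof is correct and follows essentially the same route as the paper's: both reduce the claim to the monotonicity of $\arcsin$ and the symmetry/monotonicity of the binary entropy $h$ about $1/2$. The only difference is cosmetic --- by observing that $1-h(\theta)$ is a strictly increasing function of $\lvert\theta-\tfrac{1}{2}\rvert$ and that both mutual informations are monotone in $\lvert\rho_{jk}\rvert$, you absorb the paper's four-way case analysis on the signs of $\theta_{jk}-\tfrac{1}{2}$ and $\theta_{rs}-\tfrac{1}{2}$ into a single composition of monotone maps.
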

\begin{proof}
Consider two pairs of variables $(x_j, x_k)$ and $(x_r, x_s)$ from a GGM such that $I(x_j; x_k) > I(x_r; x_s)$. Let $(u_j, u_k)$ and $(u_r, u_s)$ be the corresponding sign variables. We need to show that $I(u_j; u_k) > I(u_r, u_s)$.

According to \eqref{eq:mutual infromation normal}, if $I(x_j; x_k) > I(x_r; x_s)$ then $\vert \rho_{jk} \vert > \vert \rho_{rs} \vert$. Since the $\arcsin$ is a monotonic function, we have
\begin{equation} \label{eq:temp1}
\arcsin |\rho_{jk}| > \arcsin |\rho_{rs}|.
\end{equation}
Since $\arcsin$ is odd, using \eqref{eq:temp1} and \eqref{eq: theta} we have
\begin{equation} \label{eq:temp ineq}
\vert \theta_{jk} - \frac{1}{2} \vert > \vert \theta_{rs} - \frac{1}{2} \vert.
\end{equation}
Consider the case where $\theta_{jk} > \frac{1}{2}$ and $\theta_{rs} > \frac{1}{2}$. Then, from \eqref{eq:temp ineq} we have $\theta_{jk} > \theta_{rs} $. Since $h(\theta)$ is a descending function for $1/2 < \theta < 1$, we have $1- h(\theta_{jk}) >1- h( \theta_{rs}) $ which is the desired result.

For the case where  $\theta_{jk} > \frac{1}{2}$ and $\theta_{rs} < \frac{1}{2}$, from \eqref{eq:temp ineq} we have $\theta_{jk} > 1-\theta_{rs}$. Since $h(\theta) = h(1-\theta)$ and it is descending for $1/2 < \theta < 1$, we have $1-h(\theta_{jk}) > 1-h(1-\theta_{rs})$ which is again the desired result. Similar arguments can be applied to the other two cases.

\end{proof}


\subsection{Probability of Incorrect Ordering}

In the Chow-Liu algorithm on the signs, we rely on the estimates of the mutual information between all pairs of variables, $(j,k) \in \mathcal{V}^2$. It is shown in \cite{el-Gamal2017rate} that the following estimator for $\theta_{jk}$ is optimal in the sense that it is unbiased and has minimum variance (UMVE),
\begin{equation}\label{eq:theta estimator}
\hat{\theta}_{jk} = \frac{1}{n} \sum_{i=1}^{n} \mathbb{I}(u^{(i)}_j u^{(i)}_k = 1),
\end{equation}
where $\mathbb{I}(.)$ is the indicator function. By substituting $\hat\theta_{jk}$ into \eqref{eq:mutual information binary}, an estimator for the mutual information of $u_j$ and $u_k$ is obtained which we denote it by $\hat I(u_j; u_k)$.


Let $e = (j, k) \in \mathcal{V}^2$ be a pair of nodes in the network. For simplicity in the notation, we use $\rho_e$, $\theta_e$ and $I_e$ for $\rho_{jk}, \theta_{jk}$ and $I(u_j; u_k)$, respectively.


\begin{definition}[Crossover Event]
	Let $e$ and $e'$ be two pairs of nodes in the graph such that $I_e > I_{e'}$, the crossover event occurs when we estimate the mutual information in the reverse order, i.e. $\hat{I}_e \leq \hat{I}_{e'}$.
\end{definition}

The notion of \emph{crossover event} was previously used by Tan et al. in \cite{tan2010learning}. 
The crossover event indicates a change in the ordering of the values of mutual information estimates. However, it does not necessarily  lead to an incorrect tree recovery. On the other hand, if the estimated tree differs from the original tree $\mathcal{T}$, then at least one crossover event has occurred.

The mutual information in \eqref{eq:mutual information binary} is a monotonic function of $|\theta-1/2|$ (see the proof of Lemma \ref{lemma:mutual inf binary}). Therefore, the probability of crossover event for pairs $e$ and $e'$ with $I_e > I_{e'}$ can be stated as
\begin{align} \label{eq:crossover prob}
\Pr\left(\hat I_{e} \leq \hat I_{e'} \right) = \Pr\left(|\hat\theta_e - \frac{1}{2}| \leq |\hat\theta_{e'} - \frac{1}{2}| \right).
\end{align}

\begin{lemma} \label{lemma:sign flip}
For the probability of the crossover event, it can be assumed $\theta_{jk} > 1/2$ for all $(j,k) \in \mathcal{V}^2$.
\end{lemma}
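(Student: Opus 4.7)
The approach is to show that there is a sign-flip transformation $x_j \mapsto s_j x_j$ with $s_j \in \{-1,+1\}$ that (i) leaves the crossover probability invariant and (ii) makes every pairwise correlation positive, so that $\theta_{jk} > 1/2$ for all pairs $(j,k)$. The tree structure is preserved under coordinate-wise rescaling, so this is a legitimate reduction.

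First I would observe how quantities transform. Under $x_j \mapsto s_j x_j$ the correlation becomes $s_j s_k \rho_{jk}$, hence by \eqref{eq: theta} the true parameter becomes $\theta_{jk}$ itself when $s_j s_k = 1$ and $1-\theta_{jk}$ when $s_j s_k = -1$; in either case $|\theta_{jk} - 1/2|$ is preserved. The same is true at the estimator level: the sign $u_j^{(i)}$ becomes $s_j u_j^{(i)}$, so
\[
\mathbb{I}\bigl(s_j u_j^{(i)} \cdot s_k u_k^{(i)} = 1\bigr) \;=\; \mathbb{I}\bigl(u_j^{(i)} u_k^{(i)} = s_j s_k\bigr),
\]
which replaces $\hat\theta_{jk}$ by either $\hat\theta_{jk}$ or $1 - \hat\theta_{jk}$, again preserving $|\hat\theta_{jk} - 1/2|$. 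By \eqref{eq:crossover prob}, the crossover probability depends on the estimators only through $|\hat\theta_e - 1/2|$ and $|\hat\theta_{e'} - 1/2|$, so it is unaffected by the transformation. Note also that by Lemma \ref{lemma:mutual inf binary} the ordering $I_e > I_{e'}$ is preserved, so the crossover event is defined on the same pair.

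Next I would construct the signs explicitly. Root the underlying tree $\mathcal{T}$ at an arbitrary vertex $r$, set $s_r = 1$, and recursively for each non-root $v$ with parent $p(v)$ let
\[
s_v \;=\; s_{p(v)} \cdot \mathrm{sign}\bigl(\rho_{v,p(v)}\bigr).
\]
After the flip, every tree edge has correlation $s_v s_{p(v)} \rho_{v,p(v)} = |\rho_{v,p(v)}| > 0$. For non-adjacent vertices $j$ and $k$, I would invoke the standard property of tree-structured GGMs that the correlation between two nodes equals the product of the edge correlations along the unique path connecting them:
\[
\rho_{jk} \;=\; \prod_{(a,b) \in \mathrm{path}(j,k)} \rho_{ab}.
\]
Since each factor is positive after the flip, so is $\rho_{jk}$, and therefore $\theta_{jk} > 1/2$ for every pair $(j,k) \in \mathcal{V}^2$.

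I expect the main obstacle to be conceptual rather than technical: one must be careful that the reduction is valid for \emph{every} pair simultaneously and not only for the two edges $e, e'$ involved in a particular crossover event. The multiplicativity of correlations along tree paths is exactly the fact that makes a single global choice of signs $\{s_v\}$ suffice — an analogous statement would fail for a general (non-tree) GGM. Once that observation is in place, the argument is short.
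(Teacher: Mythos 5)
Your proof is correct, and it rests on the same core mechanism as the paper's: a sign flip of a variable sends $\theta$ to $1-\theta$ and $\hat\theta$ to $1-\hat\theta$, so $|\theta-1/2|$ and $|\hat\theta-1/2|$ — and hence, via \eqref{eq:crossover prob}, the crossover probability — are invariant. Where you genuinely differ is in how the flips are chosen. The paper works locally: for a fixed crossover event involving $e$ and $e'$, it negates one endpoint of whichever pair has $\theta<1/2$ (e.g.\ $\tilde u_j = -u_j$), checks that $\hat{\tilde\theta}=1-\hat\theta_e$, and dismisses the remaining sign patterns as ``similar.'' You instead construct a single global assignment $\{s_v\}$ by rooting $\mathcal{T}$ and propagating $s_v = s_{p(v)}\,\mathrm{sign}(\rho_{v,p(v)})$, then invoke the multiplicativity of correlations along tree paths to conclude that \emph{every} pairwise correlation becomes positive simultaneously. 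Your version buys something real: the lemma is stated for all $(j,k)\in\mathcal{V}^2$ at once, and that global form is what Theorem \ref{thm:tree error bound} actually uses, whereas the paper's per-event flip does not by itself justify assuming all $\theta_{jk}>1/2$ simultaneously (flipping $u_j$ to fix one pair can push another pair containing $j$ below $1/2$). The price is that your argument leans on the tree structure and correlation decay, while the paper's local flip works for an arbitrary pair of pairs in any GGM; also, for the flipped edge correlations to be strictly positive you implicitly use $\rho_{v,p(v)}\neq 0$ on tree edges, which holds here since $(Q^{-1})_{jk}\neq 0$ exactly on $\mathcal{E}$ (and is made explicit later via $\alpha\le\rho_{jk}$). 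Both routes are sound; yours is the cleaner justification of the lemma as stated.
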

\begin{proof}
Let $e = (j, k)$ and $e' = (r, s)$ be two arbitrary pairs of variables such that $I_e > I_{e'}$ or equivalently, $|\theta_e - 1/2| > |\theta_{e'}- 1/2|$. The probability of crossover event is 
\begin{equation} \label{eq:proof temp eq1}
\Pr\left(|\hat\theta_e - \frac{1}{2}| \leq |\hat\theta_{e'} - \frac{1}{2}| ~\bigg\vert~ |\theta_e - \frac{1}{2}| > |\theta_{e'} - \frac{1}{2}|\right).
\end{equation}
Let us assume $\theta_e < 1/2$ and $\theta_{e'} > 1/2$ (the other cases can be argued similarly). We define new variable $\tilde u_j = -u_j$. It is clear that the joint pmf of $\tilde{u}_j$ and $u_k$ is given by \eqref{eq:binary pmf} with parameter $\tilde\theta = 1 - \theta_e$.
Thus, $|\theta_e - 1/2| = |\tilde\theta_e - 1/2|$. Similarly,  there exists the following relation between the estimators $\hat\theta_e$ and $\hat{\tilde\theta}$
\begin{align*}
\hat{\tilde\theta} &= \frac{1}{n} \sum_{i=1}^n \mathbb{I}(\tilde u_j^{(i)} u_k^{(i)} = 1) =\frac{1}{n} \sum_{i=1}^n \mathbb{I}( -u_j^{(i)} u_k^{(i)} = 1) \\
&= \frac{1}{n} \sum_{i=1}^n \left(1-\mathbb{I}(u_j^{(i)} u_k^{(i)} = 1) \right) = 1 - \hat\theta_e.
\end{align*}
Thus, $|\hat\theta_e - 1/2| = |\hat{\tilde\theta} - 1/2|$. Therefore, the crossover probability in \eqref{eq:proof temp eq1} can be expressed as
\begin{equation*}
\Pr\left(|\hat{\tilde\theta} - \frac{1}{2}| \leq |\hat\theta_{e'} - \frac{1}{2}| ~\bigg\vert~ |\tilde\theta - \frac{1}{2}| > |\theta_{e'} - \frac{1}{2}|\right),
\end{equation*}
where both $\tilde\theta$ and $\theta_{e'}$ are greater than $1/2$. 
\end{proof}

Lemma \ref{lemma:sign flip} shows that without loss of generality, we can assume all $\theta_{jk}$s are greater than $1/2$ which is equivalent to assuming all correlations are positive (see equation \eqref{eq: theta}). Note that, the condition $\theta_{jk}> \frac{1}{2}$ does not imply that the estimator $\hat\theta_{jk}$ is also greater than $1/2$.

In the following lemma, to make the exposition of the ideas easier, we assume both $\hat\theta_e$ and $\hat\theta_{e'}$ are greater than $1/2$. However, in the supplementary material, we provide an upper bound on \eqref{eq:crossover prob} for all cases.
%

\begin{lemma} \label{lemma: crossover event chernoff}
	Let $\{\vect x^{(1)}, \cdots, \vect x^{(n)}\}$ be $n$ i.i.d. samples drawn from a $d$-dimensional GGM. Assume that the variables have zero means and unit variances. Then, the probability of crossover event for two pairs $e=(j,k)$ and $e'=(r, s)$ with $\theta_e > \theta_{e'}$, is upper bounded by
	\begin{equation}\label{eq:crossover upper bound hoeff}
	\Pr\left(\hat\theta_{e} \leq \hat\theta_ {e'} \right) \leq e^{- n E},
	\end{equation}
	where $E = \ln (p_0 + 2\sqrt{p_1 p_2})$ and 
	\begin{align}
	p_0 &= \Pr\left(u_j u_k = u_r u_s \right), \label{eq:p0}\\
	p_1 &= 	\Pr\left(u_j u_k = -1, u_r u_s = 1\right), \label{eq:p1}\\
	p_2 &= 	\Pr\left(u_j u_k = 1, u_r u_s = -1 \right) \label{eq:p2}.
	\end{align}
Moreover, the exponent $E$ is the tightest possible, i.e.,
\begin{equation}
E = \lim_{n \to \infty} -\frac{1}{n}\ln \Pr\left(\hat\theta_{e} \leq \hat\theta_ {e'} \right).
\end{equation}
\end{lemma}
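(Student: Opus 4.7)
The plan is to reformulate the crossover event as a one-sided large deviation for a sum of bounded i.i.d.\ random variables and then apply a Chernoff/Cram\'er argument. Define
\[
Z_i \;=\; \mathbb{I}(u_j^{(i)} u_k^{(i)} = 1) \;-\; \mathbb{I}(u_r^{(i)} u_s^{(i)} = 1), \qquad i=1,\dots,n,
\]
so that $\hat\theta_e - \hat\theta_{e'} = \tfrac{1}{n}\sum_i Z_i$ and the event $\{\hat\theta_e \le \hat\theta_{e'}\}$ coincides with $\{\sum_i Z_i \le 0\}$. Inspecting the four joint outcomes of $(u_j u_k,\, u_r u_s)$, each $Z_i$ takes values in $\{-1,0,+1\}$ with $\Pr(Z_i=-1)=p_1$, $\Pr(Z_i=0)=p_0$, and $\Pr(Z_i=+1)=p_2$. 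In particular $E[Z_1]=p_2-p_1=\theta_e-\theta_{e'}>0$, so we are genuinely in the large-deviation regime (the sample mean is being forced below its expectation).

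For the upper bound I would apply the standard exponential Markov inequality: for every $\lambda\ge 0$,
\[
\Pr\!\left(\sum_{i=1}^n Z_i \le 0\right) \;\le\; M(\lambda)^n, \qquad M(\lambda) \;:=\; E[e^{-\lambda Z_1}] \;=\; p_0 + p_1 e^{\lambda} + p_2 e^{-\lambda}.
\]
Differentiating, the unique stationary point of $M$ is $\lambda^{\star} = \tfrac12 \ln(p_2/p_1)$, which is non-negative exactly because $p_2 > p_1$, and $M$ is strictly convex so this is the minimizer on $[0,\infty)$. A direct substitution gives $M(\lambda^{\star}) = p_0 + 2\sqrt{p_1 p_2}$, yielding the claimed bound with $E = -\ln(p_0 + 2\sqrt{p_1 p_2})$ (up to the sign convention of the statement).

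Tightness would follow from Cram\'er's theorem for bounded i.i.d.\ sequences. Since $Z_i$ is bounded with positive mean, Cram\'er's theorem identifies
\[
\lim_{n\to\infty}-\tfrac{1}{n} \ln\Pr\!\left(\tfrac{1}{n}\sum_i Z_i \le 0\right) \;=\; \sup_{\lambda\ge 0}\bigl(-\ln M(\lambda)\bigr),
\]
which is precisely the $E$ computed above. If a self-contained argument is preferred, the matching lower bound comes from an exponential change of measure: under the tilted law $\widetilde P(z) \propto e^{-\lambda^{\star} z} P(z)$ the variable $Z$ is centered, so a local central-limit estimate on this $\{-1,0,+1\}$-valued lattice walk shows $\widetilde P(\sum_i Z_i \le 0) = \Omega(1/\sqrt{n})$; unwinding the Radon--Nikodym derivative then recovers the exponent exactly.

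The routine part is the upper bound, which reduces to a one-variable convex optimization. The real obstacle is the matching lower bound, i.e.\ showing the Chernoff exponent is saturated; invoking Cram\'er is the cleanest path, but if one spells it out from scratch one has to execute the tilting plus local-CLT step carefully for a lattice-valued walk. A minor issue to treat separately is the degenerate cases $p_1=0$ or $p_2=0$, in which $\lambda^{\star}$ is infinite and the crossover probability vanishes identically for $n$ large enough; these are boundary cases and pose no substantive difficulty.
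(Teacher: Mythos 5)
Your proof is correct and follows essentially the same route as the paper's: the same Chernoff/exponential--Markov bound on the $\{-1,0,+1\}$-valued i.i.d.\ sum (your $Z_i$ is just the negative of the paper's $T_i$), the same one-variable optimization yielding $p_0+2\sqrt{p_1p_2}$, and the same appeal to Cram\'er's theorem for tightness of the exponent. You are also right about the sign convention: for the bound $e^{-nE}=\left(p_0+2\sqrt{p_1p_2}\right)^n$ to be nontrivial one needs $E=-\ln\!\left(p_0+2\sqrt{p_1p_2}\right)$, and the missing minus sign in the lemma statement is a typo in the paper.
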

\begin{proof}
	Consider two pairs of nodes $e=(j,k)$ and $e'=(r,s)$. 
	By defining a random variable $T_i$ as
	\begin{equation} \label{eq:T_i}
	T_i = \mathbb{I}(u_r^{(i)} u_s^{(i)} = 1) -\mathbb{I}(u_j^{(i)} u_k^{(i)} = 1),
	\end{equation} 
	the probability of crossover event can be written as
	\begin{align*}
	\Pr\left(\hat\theta_{e} \leq \hat\theta_{e'} \right) &= \Pr\left(\sum_{i=1}^n T_i \geq 0\right)	\\ 
	&= \Pr\left(e^{\lambda\sum_{i=1}^n T_i} \geq 1\right), \qquad \lambda > 0	\\
	&\stackrel{(a)}{\leq} \mathbb{E}\left[e^{\lambda\sum_{i=1}^n T_i}\right]	\\
	&= \left(\mathbb{E}\left[e^{\lambda T}\right] \right)^n	\\
	&= \left(p_0 + p_1 e^{\lambda} + p_2 e^{-\lambda} \right)^n,
	\end{align*}
	where the inequality $(a)$ is by Markov's inequality. The random variable $T$ can take values $[0, 1, -1]$ with probabilities $[p_0, p_1, p_2]$ defined in \eqref{eq:p0}-\eqref{eq:p2}.  
	By minimizing the last expression for $\lambda > 0$, we obtain the Chernoff bound as follows
	\begin{equation}
	\Pr\left(\hat\theta_{e} \leq \hat\theta_{e'} \right) \leq \left(p_0+2\sqrt{p_1 p_2} \right)^n = e^{-n E},
	\end{equation}
	where $E = \ln(p_0 + 2\sqrt{p_1 p_2})$. Incorporating Theorem 2.1.24 in \cite{dembo2010large}, the exponent $E$ is indeed tight.
\end{proof}

Unfortunately, there is no closed-form solution for the probabilities in \eqref{eq:p0}-\eqref{eq:p2}. However, when $e$ and $e'$ share a common variable, these probabilities can be obtained analytically. For example, if $u_k = u_s$ (i.e. $u_k$ is the common variable between $e$ and $e'$), then the probabilities are given by \cite{bacon1963approximations}
\begin{align}
p_0 &= \frac{1}{2} + \frac{\arcsin\rho_{jk}\rho_{ks}}{\pi},	\\
p_1 &= \frac{1}{4} + \frac{-\arcsin\rho_{jk} + \arcsin\rho_{ks} -\arcsin\rho_{jk}\rho_{ks}}{2\pi}, \\
p_2 &= \frac{1}{4} + \frac{\arcsin\rho_{jk} - \arcsin\rho_{ks} -\arcsin\rho_{jk}\rho_{ks}}{2\pi}.
\end{align}
In particular, for the star structure, where all the true edges share a common node, the bound of Lemma \ref{lemma: crossover event chernoff} can be calculated in a closed-form.

In the following lemma, we propose another upper bound on the probability of the crossover event using Hoeffding's bound. This bound is not tight, but it yields a closed form expression which can be used to obtain a closed form bound on the probability of incorrect tree estimation. 
\begin{lemma} \label{lemma: crossover event hoeffding}
The probability of the crossover event for two pairs $e$ and $e'$ with $\theta_{e} > \theta_{e'}$, is upper bounded by
\begin{equation}
\Pr\left(\hat\theta_e \leq \hat\theta_{e'} \right) \leq e^{-\frac{1}{2} n \Delta\theta_{e,e'}^2},
\end{equation}
where $\Delta\theta_{e,e'} = \theta_{e} - \theta_{e'}$.
\end{lemma}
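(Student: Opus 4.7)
My plan is to reuse the random variable $T_i$ introduced in the proof of Lemma \ref{lemma: crossover event chernoff} and apply Hoeffding's inequality directly, rather than the Chernoff bound. Recall
\[
T_i = \mathbb{I}(u_r^{(i)} u_s^{(i)} = 1) - \mathbb{I}(u_j^{(i)} u_k^{(i)} = 1),
\]
so that $T_i$ is bounded, taking values in $\{-1, 0, +1\}$, and the crossover event $\hat\theta_e \leq \hat\theta_{e'}$ is exactly $\sum_{i=1}^n T_i \geq 0$. The $T_i$ are i.i.d.\ because the samples $\vect{x}^{(i)}$ are i.i.d.

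Next, I would compute the mean. Since $\Pr(u_j u_k = 1) = \theta_e$ and $\Pr(u_r u_s = 1) = \theta_{e'}$, we get $\mathbb{E}[T_i] = \theta_{e'} - \theta_e = -\Delta\theta_{e,e'} < 0$. Rewriting the crossover event by subtracting the mean,
\[
\Pr\!\left(\sum_{i=1}^n T_i \geq 0 \right) = \Pr\!\left( \sum_{i=1}^n \bigl(T_i - \mathbb{E}[T_i]\bigr) \geq n\, \Delta\theta_{e,e'} \right).
\]

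Then I would invoke Hoeffding's inequality for sums of i.i.d.\ bounded random variables with range $b-a = 1 - (-1) = 2$, which yields
\[
\Pr\!\left( \sum_{i=1}^n \bigl(T_i - \mathbb{E}[T_i]\bigr) \geq n\, \Delta\theta_{e,e'} \right) \leq \exp\!\left(-\frac{2\, (n \Delta\theta_{e,e'})^2}{n \cdot 2^2}\right) = \exp\!\left(-\tfrac{1}{2} n\, \Delta\theta_{e,e'}^2\right),
\]
giving the claimed bound.

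There is no genuine obstacle here; the only thing to be careful about is that $T_i$ has range $2$ (not $1$), so the naive application of Hoeffding produces the correct constant $1/2$ in the exponent rather than $2$. It is also worth noting that, unlike the tight Chernoff exponent $E$ of Lemma \ref{lemma: crossover event chernoff}, this Hoeffding bound depends only on the gap $\Delta\theta_{e,e'}$ and not on the joint distribution of $(u_j u_k, u_r u_s)$; this loss of tightness is exactly what buys us the closed-form expression needed for the subsequent tree-error analysis.
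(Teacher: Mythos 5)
Your proof is correct and follows essentially the same route as the paper: both define the same bounded variable $T_i \in \{-1,0,1\}$, recenter by its mean $-\Delta\theta_{e,e'}$, and apply Hoeffding's inequality over the range $[-1,1]$ to get the exponent $\tfrac{1}{2} n \Delta\theta_{e,e'}^2$. Your explicit check of the range constant and the closing remark on why this looser bound is still useful are accurate but not departures from the paper's argument.
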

\begin{proof}
	Since $\hat{\theta}_e$ and $\hat{\theta}_{e'}$ are unbiased estimators for $\theta_e$ and $\theta_{e'}$, we have
	\begin{align*}
	\Pr\left(\hat\theta_{e} \leq \hat\theta_ {e'} \right) &= \Pr\left(\hat\theta_{e'} - \hat\theta_ {e} - \mathbb{E}\left[\hat{\theta}_{e'} - \hat{\theta}_e\right] \geq \Delta\theta_{e,e'} \right).	
	\end{align*}
	Defining variable $T_i$ as \eqref{eq:T_i}, we can write the above probability as
	\begin{equation*}
	\Pr\left(\hat\theta_{e} \leq \hat\theta_ {e'} \right) = \Pr\left(\frac{1}{n}\sum_{i=1}^n \left( T_i - \mathbb{E}\left[T_i\right] \right) \geq \Delta\theta_{e,e'} \right).
	\end{equation*}
	It is clear that $T_i \in \{-1, 0, 1\}$, thus it is bounded in interval $[-1, 1]$. Using the Hoeffding's inequality we can obtain an upper bound on the probability of crossover event as
	\begin{equation*}
	\Pr\left(\hat\theta_{e} \leq \hat\theta_ {e'} \right) \leq  e^{-\frac{1}{2} n \Delta\theta_{e,e'}^2}.
	\end{equation*}
\end{proof}

\subsection{Probability of Incorrect Recovery}
In this section, we are interested in bounding $\Pr\left(\widehat{\mathcal{T}}\neq \mathcal{T} \right)$ where  $\widehat{\mathcal{T}} = (\mathcal{V}, \widehat{\mathcal{E}})$ refers to the estimated tree from our proposed sign method. If we assume that  any error in the ordering of mutual information estimates may lead to incorrect recovery of the tree structure, then by the union bound we have 

%
\begin{equation} \label{eq:tree error union bound}
\Pr\left(\widehat{\mathcal{T}} \neq \mathcal{T} \right) \leq \sum_{e,e' \in \mathcal{V}^2} e^{- \frac{1}{2} n \Delta\theta_{e,e'}^2}
\end{equation}

In the following theorem, we improve on the preceding bound by removing some of the crossover events. Moreover, we obtain a more compact and suitable formula for the bound.

\begin{theorem} \label{thm:tree error bound}
	Let   $\{\vect x^{(1)}, \cdots, \vect x^{(n)}\}$ be $n$ i.i.d. samples drawn from a $d$-dimensional tree-structured GGM. We construct $n$ binary vectors $\{\vect u^{(1)}, \cdots, \vect u^{(n)}\}$ where $\vect u^{(i)} \in \{-1,+1\}^d$ is the sign vector of $\vect x^{(i)}$. Assume for all $(j,k)\in\mathcal{E}$, $\alpha \leq \rho_{jk} \leq \beta$ where $0 < \alpha < \beta < 1$. Then the probability of incorrect tree recovering using the Chow-Liu algorithm via the binary vectors is upper bounded by
	\begin{equation} \label{eq:tree err bound binary data}
	\Pr\left(\widehat{\mathcal{T}} \neq \mathcal{T}\right) \leq d^3 e^{- \frac{1}{2} n h^2(\alpha, \beta)},
	\end{equation}
	where $h(\alpha, \beta) = \dfrac{1}{\pi}\left(\arcsin \alpha - \arcsin \alpha\beta \right)$.
\end{theorem}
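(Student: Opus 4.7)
The plan is to sharpen the naive union bound in \eqref{eq:tree error union bound} in two ways: (i) restrict the sum to a much smaller collection of ``dominant'' crossover events that actually govern Kruskal's output, and (ii) exploit the tree factorization of the GGM to obtain a uniform lower bound on $\Delta\theta_{e,e'}$ over this collection. The input is Lemma \ref{lemma: crossover event hoeffding} together with the standing assumption (justified by Lemma \ref{lemma:sign flip}) that all $\theta_{jk}>1/2$, or equivalently all $\rho_{jk}\geq 0$.

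First I would argue that $\widehat{\mathcal{T}}\neq \mathcal{T}$ forces a specific type of crossover. If Kruskal's algorithm returns the wrong tree, then some non-tree edge $e'=(r,s)\notin\mathcal{E}$ is inserted in place of a tree edge $e\in\mathcal{E}$ on the unique $\mathcal{T}$-path between $r$ and $s$, so that $\hat\theta_e\leq \hat\theta_{e'}$. Via a union bound I will reduce the problem to these ``dominant'' pairs $(e,e')$. Counting them is easy: splitting $\mathcal{T}$ by removing any tree edge $e$ partitions $\mathcal{V}$ into two sets of sizes summing to $d$, bounding the non-tree edges crossing $e$ by at most $d^2/4$; summing over $d-1$ tree edges gives $\leq d^3/4 \leq d^3$ dominant pairs.

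Next I would establish the uniform gap $\Delta\theta_{e,e'}\geq h(\alpha,\beta)$. For a tree-structured GGM with unit variances the correlation between $r$ and $s$ factorizes as $\rho_{rs}=\prod_{(i,j)\in \mathrm{path}(r,s)}\rho_{ij}$, a standard property I would invoke. For any dominant pair, the path has length at least $2$, so writing $\rho_{e'}=\rho_e\cdot\prod_{\mathrm{other\ edges}}\rho_{ij}$ with every factor in $[\alpha,\beta]$ yields $0\leq \rho_{e'}\leq \beta\,\rho_e$. Using $\theta_{jk}=\tfrac12+\tfrac1\pi\arcsin \rho_{jk}$ this gives
\[
\Delta\theta_{e,e'}=\tfrac{1}{\pi}\bigl(\arcsin\rho_e-\arcsin\rho_{e'}\bigr)\ \geq\ \tfrac{1}{\pi}\bigl(\arcsin\rho_e-\arcsin(\beta\rho_e)\bigr).
\]
It remains to minimize $g(x):=\arcsin x-\arcsin(\beta x)$ over $x\in[\alpha,\beta]$. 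A short calculation shows $g'(x)=\tfrac{1}{\sqrt{1-x^2}}-\tfrac{\beta}{\sqrt{1-\beta^2 x^2}}\geq 0$ (since $\beta<1$), so $g$ is increasing and its minimum on $[\alpha,\beta]$ is $g(\alpha)=\pi h(\alpha,\beta)$. Hence $\Delta\theta_{e,e'}\geq h(\alpha,\beta)$ for every dominant pair.

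Combining the three ingredients, Lemma \ref{lemma: crossover event hoeffding} applied to each of the at most $d^3$ dominant pairs gives
\[
\Pr(\widehat{\mathcal{T}}\neq\mathcal{T})\ \leq\ \sum_{(e,e')\ \mathrm{dominant}} e^{-\tfrac{1}{2} n\,\Delta\theta_{e,e'}^2}\ \leq\ d^3\,e^{-\tfrac{1}{2} n h^2(\alpha,\beta)}.
\]
The step I expect to be the main obstacle is the reduction to dominant crossovers: one must argue carefully, using the cycle property of Kruskal's algorithm, that a wrong output necessarily produces a crossover between a tree edge and a non-tree edge on its induced cycle—only this identification simultaneously gives the $d^3$ count and the $\rho_{e'}\leq \beta\rho_e$ bound that drives the exponent. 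The monotonicity of $g$ is the only nontrivial analytic point, and it is straightforward.
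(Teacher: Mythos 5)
Your proposal is correct and follows essentially the same route as the paper: reduce $\widehat{\mathcal{T}}\neq\mathcal{T}$ to crossovers between a tree edge and a non-tree edge crossing its cut, apply the Hoeffding bound of Lemma \ref{lemma: crossover event hoeffding} with a union bound over at most $d^3$ such pairs, and use the correlation-decay factorization to get $\rho_{e'}\leq\beta\rho_e$ and hence the uniform gap $h(\alpha,\beta)$ via the same arcsin minimization (the paper phrases it as a two-variable optimization over $(\rho_e,\eta)$ through a ``strongest rival'' edge, but the computation is identical to your one-variable monotonicity argument).
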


The rest of this section is devoted to the proof of the theorem.
By removing any edge of the tree, $\mathcal{T}$ splits into two separate sub-trees. Let us assume that in the procedure of tree reconstruction, a true edge $e=(j, k)$ does not appear in $\widehat{\mathcal{T}}$. Removing $e$ from the tree splits  $\mathcal{T}$ into  $\mathcal{T}_1 = (\mathcal{V}_1, \mathcal{E}_1)$ and $\mathcal{T}_2 = (\mathcal{V}_2, \mathcal{E}_2)$. Hence, there should be  an edge $e'=(r, s)\in\widehat{\mathcal{E}}$ which is not in $\mathcal{E}$ and connects $\mathcal{T}_1$ and $\mathcal{T}_2$. This is due the fact that we constrain the tree to be connected. 
%
%
%
The following lemma shows that $e$ is in fact the strongest edge connecting $\mathcal{T}_1$ and $\mathcal{T}_2$. 

\begin{lemma} \label{lemma:corr decay binary}
	Consider a tree-structured GGM $\mathcal{T} = (\mathcal{V}, \mathcal{E})$. Let $e\in \mathcal{E}$ be an edge which connects two sub-trees $\mathcal{T}_1 = (\mathcal{V}_1, \mathcal{E}_1)$ and $\mathcal{T}_2 = (\mathcal{V}_2,\mathcal{E}_2)$. Then for any node pair $(r, s)$ where $r \in \mathcal{V}_1$ and $s \in \mathcal{V}_2$, we have
	\begin{equation*}
	\theta_{e} \geq \theta_{rs}.
	\end{equation*}
\end{lemma}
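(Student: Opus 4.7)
The plan is to reduce the desired inequality $\theta_e \geq \theta_{rs}$ to a statement about correlation coefficients, and then exploit the multiplicative structure of correlations along paths in a tree-structured GGM.

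First, by Lemma \ref{lemma:sign flip} I may assume without loss of generality that every $\theta_{jk} \geq 1/2$, which by \eqref{eq: theta} is equivalent to assuming every $\rho_{jk} \geq 0$. Since $\theta = 1/2 + \arcsin(\rho)/\pi$ is strictly increasing in $\rho$ on $[0,1)$, it then suffices to show the corresponding inequality for the correlations, namely $\rho_e \geq \rho_{rs}$.

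Next I would invoke the path factorization of pairwise correlations on a tree-structured GGM with unit variances: if $r = v_0, v_1, \ldots, v_m = s$ is the unique path from $r$ to $s$ in $\mathcal{T}$, then
\begin{equation*}
\rho_{rs} \;=\; \prod_{i=1}^{m} \rho_{v_{i-1} v_i}.
\end{equation*}
This identity follows by induction on $m$ from the Markov property of $\mathcal{T}$: conditioned on $x_{v_1}$, the variable $x_r$ is independent of $(x_{v_2}, \ldots, x_{v_m})$, and in the unit-variance Gaussian case $\mathbb{E}[x_r \mid x_{v_1}] = \rho_{r v_1} x_{v_1}$; hence $\mathbb{E}[x_r x_s] = \rho_{r v_1} \mathbb{E}[x_{v_1} x_s]$, and one applies the inductive hypothesis to the shorter path $v_1, \ldots, v_m$.

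Now, removing the edge $e$ disconnects $\mathcal{T}$ into $\mathcal{T}_1$ and $\mathcal{T}_2$ with $r \in \mathcal{V}_1$ and $s \in \mathcal{V}_2$, so the unique $r$--$s$ path in $\mathcal{T}$ must traverse $e$. Therefore $\rho_e$ appears as one of the factors in the product above, while all remaining factors lie in $[0,1)$. Consequently $\rho_{rs} \leq \rho_e$, and by the monotonicity step the conclusion $\theta_{rs} \leq \theta_e$ follows. The only nontrivial step is the correlation factorization identity; everything else is bookkeeping, so I expect no substantive obstacle.
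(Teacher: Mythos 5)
Your proof is correct and follows essentially the same route as the paper: both reduce $\theta_e \geq \theta_{rs}$ to $\rho_e \geq \rho_{rs}$ via the monotonicity of $\arcsin$ in \eqref{eq: theta}, and both rest on the correlation-decay identity $\rho_{rs} = \prod_{e \in \mathrm{Path}(r,s)} \rho_e$ together with the observation that the $r$--$s$ path must traverse $e$. The only difference is cosmetic: the paper cites the factorization identity from a reference, whereas you sketch its inductive proof from the Markov property, and you make explicit the positivity assumption on correlations that the paper establishes just after Lemma \ref{lemma:sign flip}.
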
 
\begin{proof}
	In any tree-structured Gaussian distributions, we have \cite{cover2006elements}
	\begin{equation} \label{eq:corr decay}
	\rho_{rs} = \prod_{e \in \mathrm{Path}(r,s)}  \rho_e,
	\end{equation}
	where $\mathrm{Path}(r,s)$ is the set of edges in the path connecting $r$ and $s$ in the tree. This means that the correlation of $(r,s)$ is less than any edge in the path connecting them. Since $r \in \mathcal{V}_1$ and $s \in \mathcal{V}_2$, then the path connecting $r$ and $s$ must include the edge $e$. On the other hand, according to \eqref{eq: theta}, if $\rho_{rs} \leq \rho_e$ then $\theta_{rs} \leq \theta_{e}$ since the function $\arcsin$ is monotonic.
\end{proof}

According to the Kruskal algorithm, the estimated $\widehat{\theta}_{jk}$s for all $(j,k)\in \mathcal{V}^2$ are sorted in a descending order. Scanning from the top of the list, an edge is selected as a part of the tree if it does not create any cycle with the previous picked edges. In the case of error, a true edge like $e \in \mathcal{E}$ is replaced by another false edge $e' \notin \mathcal{E}$ if $\widehat\theta_{e'} > \widehat\theta_{e}$ and $e'$ connects the two subtrees created by removing $e$. On the other hand, from Lemma \ref{lemma:mutual inf binary}, we know that $\theta_{e'} < \theta_{e}$. Thus, replacing $e$ by $e'$ implies a crossover event on them.

Let $e\in \mathcal{E}$ be an edge which connects two sub-trees $\mathcal{T}_1 = (\mathcal{V}_1,\mathcal{E}_1)$ and $\mathcal{T}_2 = (\mathcal{V}_2,\mathcal{E}_2)$. Let
$$\mathcal{C}(e) = \{e' \mid e' \notin \mathcal{E} \text{ and connects } \mathcal{T}_1 \text{ and } \mathcal{T}_2 \},$$
be the set of all candidate false edges which can substitute the edge $e$ in the estimated tree. Then, from the union bound and incorporating Lemma \ref{lemma: crossover event hoeffding} and Lemma \ref{lemma:corr decay binary}, we have
\begin{align}
\Pr\left(\mathcal{T} \neq \widehat{\mathcal{T}}\right) &\leq \sum_{e \in \mathcal{E}} \Pr\left(e \notin \widehat{\mathcal{E}}\right)	\nonumber \\
&\leq \sum_{e \in \mathcal{E}} \sum_{e' \in \mathcal{C}(e)} \Pr\left(\hat{\theta}_{e'} \geq \hat{\theta}_e \right)	\nonumber \\
&\leq \sum_{e \in \mathcal{E}} \sum_{e' \in \mathcal{C}(e)} e^{-\frac{1}{2} n \Delta\theta_{e,e'}^2}.	\label{eq:err prob upper bound mid}
\end{align}

If we obtain a lower bound on $\Delta\theta_{e,e'}$ which is independent of the tree structure, say $\Delta_0 \leq \Delta\theta_{e,e'}$, then we have 
\begin{align}\label{eq:final_rep}
\Pr\left(\mathcal{T} \neq \widehat{\mathcal{T}}\right) &\leq 
d^3 e^{-\frac{1}{2} n \Delta_0^2}.
\end{align}
To this end, we need the following definition. 

\begin{definition} [Strongest Rival]\label{def:stronges neighbor}
	The strongest rival of an edge $e$ is an edge  $e^* \in \mathcal{C}(e)$ with the highest $\theta_{e^*}$.
\end{definition}

To find the strongest rival for an edge $e=(j,k)$, we use the correlation decay property of the tree-structured GGMs. In fact, it is easy to see that  $e^* $ is either an edge connecting node $j$ to one of the neighbors of node $k$ or vice versa. In other words, the strongest rival of an edge $e$, is one of the its neighbor edges (two edges are neighbor if share a common node). \figurename{}~\ref{fig:most prob edge} illustrates the strongest rival of the edge $e = (j,k)$ in a sample tree. The following lemma gives a lower bound on $\Delta\theta_{e,e^*}$.

\begin{lemma}
	Let $e^*$ be the strongest rival of an edge $e \in \mathcal{E}$. If the correlation coefficients $\rho_{jk}$ for all $(j,k) \in \mathcal{E}$ satisfy $\alpha \leq \rho_{jk} \leq \beta$ where $ 0 < \alpha < \beta < 1$, then $\Delta\theta_{e,e^*} \geq h(\alpha, \beta)$ where
	\begin{equation} \label{eq:h(a,b)}
	h(\alpha, \beta) = \frac{1}{\pi} (\arcsin\alpha - \arcsin\alpha\beta).
	\end{equation}
\end{lemma}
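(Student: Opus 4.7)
The plan is to combine the correlation decay identity \eqref{eq:corr decay} with the $\arcsin$ relation \eqref{eq: theta}, and then minimize the resulting quantity over the admissible range of the two relevant correlations.

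First I would pin down the structure of the strongest rival. Since $e=(j,k)$ splits $\mathcal{T}$ into $\mathcal{T}_1\ni j$ and $\mathcal{T}_2\ni k$, any $e^*=(r,s)\in\mathcal{C}(e)$ satisfies $r\in\mathcal{V}_1$ and $s\in\mathcal{V}_2$, so the (unique) path from $r$ to $s$ in $\mathcal{T}$ must traverse $e$. By \eqref{eq:corr decay}, $\rho_{e^*}=\prod_{e''\in\mathrm{Path}(r,s)}\rho_{e''}$, and because every tree-edge correlation lies in $[\alpha,\beta]\subset(0,1)$, each additional edge in the path strictly shrinks $\rho_{e^*}$ and hence (by \eqref{eq: theta} together with the monotonicity of $\arcsin$) strictly shrinks $\theta_{e^*}$. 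Consequently the strongest rival has a path of the shortest possible length, namely two edges: either $r=j$ and $s$ is a neighbour of $k$ in $\mathcal{T}_2$, or symmetrically. In either case $\rho_{e^*}=\rho_e\,\rho_{e''}$ for some tree edge $e''$ adjacent to $e$, and both $\rho_e,\rho_{e''}\in[\alpha,\beta]$.

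Plugging this into \eqref{eq: theta} yields
\[
\Delta\theta_{e,e^*}=\frac{1}{\pi}\bigl(\arcsin\rho_e-\arcsin(\rho_e\rho_{e''})\bigr).
\]
Define $F(a,b):=\arcsin a-\arcsin(ab)$ on $[\alpha,\beta]^2$; the remaining task is the two-variable minimization $\min F=F(\alpha,\beta)$. I would do this by computing
\[
\frac{\partial F}{\partial a}=\frac{1}{\sqrt{1-a^2}}-\frac{b}{\sqrt{1-a^2b^2}},\qquad \frac{\partial F}{\partial b}=-\frac{a}{\sqrt{1-a^2b^2}}.
\]
The second partial is manifestly negative, so $F$ is decreasing in $b$. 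For the first, cross-multiplying and squaring reduces positivity to $1-a^2b^2>b^2(1-a^2)$, i.e., $1>b^2$, which holds since $b\le\beta<1$. Hence $F$ is strictly increasing in $a$, the minimum is attained at $(\alpha,\beta)$, and $\Delta\theta_{e,e^*}\geq\frac{1}{\pi}(\arcsin\alpha-\arcsin\alpha\beta)=h(\alpha,\beta)$.

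The main conceptual step is the first one, identifying the strongest rival as an edge whose path through $\mathcal{T}$ has exactly two edges adjacent to $e$; once that is in hand, the optimization is a routine calculus check whose only delicate point is ruling out sign changes in $\partial_a F$, which the inequality $\beta<1$ takes care of cleanly.
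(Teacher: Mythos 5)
Your proposal is correct and follows essentially the same route as the paper: identify the strongest rival as a two-edge path adjacent to $e$ via the correlation decay identity, write $\rho_{e^*}=\eta\rho_e$ with both factors in $[\alpha,\beta]$, and minimize $\arcsin\rho_e-\arcsin(\eta\rho_e)$ by checking the signs of the two partial derivatives, landing at $(\alpha,\beta)$. Your write-up is in fact slightly more careful than the paper's in two spots --- justifying why the rival's path has exactly two edges, and verifying that $\partial_a F>0$ reduces to $b^2<1$ rather than merely asserting it.
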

\begin{proof}
	Let $e = (j, k)$. Since $e^* \notin \mathcal{E}$ and it connects a neighbor of $j$ to $k$ (or vice versa), using \eqref{eq:corr decay} we have
	\begin{align*}
	\alpha\rho_{e} \leq \rho_{e^*} \leq \beta\rho_{e}.
	\end{align*}
	Therefore, in order to obtain a lower bound on $\Delta\theta_{e,e^*}$, we need to solve the following constrained optimization problem:
	\begin{align} \label{eq:delta phi opt problem}
	&\min_{\rho_{ e}, \rho_{ e^*}}  \arcsin \rho_{ e} - \arcsin \rho_{e^*},	\\
	&\quad \text{subject to:}	\nonumber	\\
	&\quad\qquad	\alpha\rho_{ e} \leq \rho_{e^*} \leq \beta\rho_{e}, \nonumber	\\
	&\quad\qquad \alpha \leq \rho_{e} \leq \beta.	\nonumber
	\end{align}
	Defining a new parameter $\eta = \rho_{e^*}/\rho_e$, the above optimization problem can be written as
	\begin{align} \label{eq:delta phi opt problem2}
	&\min_{\rho_{e}, \eta} ~\arcsin \rho_{e} - \arcsin \eta\rho_{e},	\\
	&\quad \text{subject to:}	\nonumber	\\
	&\quad\qquad \alpha \leq \rho_{e} \leq \beta \nonumber \\
	&\quad\qquad \alpha \leq \eta \leq \beta. \nonumber
	\end{align}
	Taking derivative with respect to $\rho_e$ and $\eta$ we have,
	\begin{align*}
	\frac{\partial}{\partial\rho_e} &= \frac{1}{\sqrt{1-\rho_e^2}} - \frac{\eta}{\sqrt{1-\eta^2\rho_e^2}} > 0,	\\
	\frac{\partial}{\partial\eta} &= \frac{-\rho_e}{\sqrt{1-\eta^2\rho_e^2}} < 0.
	\end{align*}
	Hence, the minimum is attained at $(\rho_e, \eta) = (\alpha, \beta)$. Thus, $\rho_{ e^*} = \eta\rho_e = \alpha\beta$. By substituting $(\rho_e, \rho_{ e^*}) = (\alpha, \alpha\beta)$ into $\Delta\theta_{e,e^*}$ the lower bound in \eqref{eq:h(a,b)} is obtained.
\end{proof}

With the above lemma, we obtain $\Delta_0 =	h(\alpha, \beta)$ in \eqref{eq:final_rep}. Hence, we complete the proof of  Theorem \ref{thm:tree error bound}.

\begin{remark}
	Since we have not imposed any constraint  on the tree structure, the upper bound in Theorem \ref{thm:tree error bound} is obtained for the worst case of the structure resulting the prefactor of $d^3$ which is tight for the chain structure. However, the prefactor can be reduced in many cases such as the star structure which requires  $d^2$ as the prefactor. However, the prefactor has negligible effect if sufficiently large sample size $n$ is available. 
\end{remark}

\begin{remark} \label{remark:tightness of star structure}
	Replacing $\Delta_0$ for all the rival edges is not optimal in general. However, in the special case of the star structure with equal edge weights, for instance, it is tight. Hence, some prior knowledge about the tree structure  can lead to possibly better bounds.
\end{remark}

\begin{figure}[t]
	\centering
	\includegraphics[scale=1]{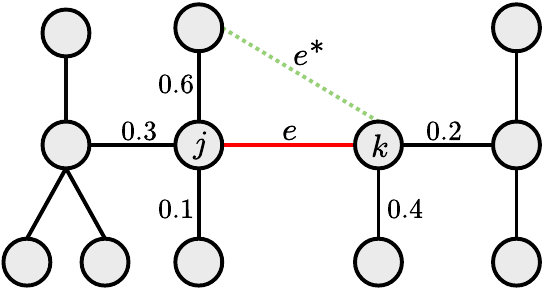}
	\caption{Illustration of the strongest rival edge for $e = (j,k)$ in a sample tree. The weights are the correlation coefficients.}
	\label{fig:most prob edge}
\end{figure}


\section{Structure Learning with Direct Correlation Estimation} \label{sec:correlation estimation}



The Chow-Liu algorithm requires a good estimate of the mutual information between pairs of variables. In GGMs, according to \eqref{eq:mutual infromation normal}, quality of the mutual information estimation depends on the accuracy of correlation estimation. In the sign method (Section \ref{sec:structure learning using binary}), we do not estimate the correlations between normal variables based on the binary data. By contrast, in this section, we aim to quantize data with $R$ bits and  to estimate the correlation coefficients between the normal variables using the quantized data.

Instead of estimating the mutual information in  \eqref{eq:mutual infromation normal} using the quantized data, we use an unbiased estimator for $\rho^2$. This is due to the fact that obtaining an unbiased estimator for the mutual information is a hard problem. The following estimator for $\rho^2$  is unbiased:
\begin{equation}\label{eq:rho^2 hat}
\widehat{\rho^2} = \frac{n}{n+1} \left(\bar\rho^2 - \frac{1}{n} \right), 
\end{equation}
where $\bar \rho$ is the sample correlation coefficient, i.e.
\begin{equation}
\bar \rho = \frac{1}{n} \sum_{i=1}^{n} x^{(i)}_j x^{(i)}_k.
\end{equation}

In our problem setting, we calculate $\bar \rho$ using the $R$-bit quantized data. We define
\begin{equation}\label{eq:rho estimate quantize}
\bar \rho_q = \frac{1}{n} \sum_{i=1}^{n} u_j^{(i)} u_k^{(i)},
\end{equation}
which is the sample correlation coefficient of the quantized data.
Obviously, by increasing the bit rate $R$, $\bar\rho_q$ approaches $\bar{\rho}$.
To measure the performance of the quantization method, we define the relative error as
\begin{equation} \label{eq:samp err func}
\mathrm{err_{rel}} \triangleq \mathbb{E} \left[\left\vert \bar\rho - \bar{\rho}_q \right\vert \right],
\end{equation}
where the expectation is taken over all original and quantized variables $(x_j, x_k, u_j, u_k)$. We call the error function in \eqref{eq:samp err func} as \emph{relative correlation error}. It measures the expected difference between sample correlation coefficient on the original and quantized data. Our main objective is to obtain a good estimate for the true correlation coefficient. Thus, we define the estimation error of a quantization method as
\begin{equation} \label{eq:estimation error}
\mathrm{err_{est}} = \mathbb{E}\left[\left\vert \rho - \bar{\rho}_q \right\vert\right],
\end{equation}
where the expectation is taken over the quantized variables $(u_j, u_k)$. 

Although we aim at designing a quantization method which have a small estimation error, most of existing efficient source coding schemes are designed to minimize the \emph{reconstruction error}. More precisely, denoting the original and quantized versions by $x$ and $u$. These methods quantize the data with the following constraint
\begin{equation} \label{eq:reconst error}
\mathbb{E}\left[(x - u)^2\right] \leq D,
\end{equation}
where $D$ is the maximum allowable reconstruction error. For example, there exists an optimal coding scheme for normal variables based on the rate-distortion theory \cite{cover2006elements}. In our problem setting, minimizing the reconstruction error is not the main objective. Our goal is to minimize the error functions defined in \eqref{eq:samp err func} and \eqref{eq:estimation error}. However, the following theorem guarantees that upper bounding the reconstruction error results in upper bounding the relative correlation error.

\begin{theorem} \label{thm:distortion}
	Let $ \{x_1, \cdots, x_n\}$ and $\{y_1, \cdots, y_n\}$ be two sets of i.i.d. samples from any joint distribution $P(x,y)$ such that $x$ and $y$ have zero means and unit variances. If $ \{u_1, \cdots, u_n\}$ and $ \{v_1, \cdots, v_n\}$ are the corresponding encoded sequences by any quantizer with
	\begin{align*}
	\mathbb{E} \left[ \frac{1}{n} \sum_{i=1}^{n} \left(x_i -  u_i\right)^2   \right] &\leq D_1,	\\
	\mathbb{E} \left[ \frac{1}{n} \sum_{i=1}^{n} \left(y_i -  v_i\right)^2   \right] &\leq D_2,
	\end{align*}
then
	\begin{align} \label{eq:relative err bound}
	\mathrm{err_{rel}}  &\leq 
	\sqrt{D_1} + \sqrt{D_2} + \sqrt{D_1 D_2}.
	\end{align}
\end{theorem}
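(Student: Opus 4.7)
The plan is to decompose the gap $\bar\rho - \bar\rho_q$ into three residual terms by introducing the per-sample quantization errors $e_i := x_i - u_i$ and $f_i := y_i - v_i$. Expanding
\begin{equation*}
u_i v_i = (x_i - e_i)(y_i - f_i) = x_i y_i - x_i f_i - y_i e_i + e_i f_i,
\end{equation*}
averaging over $i$, and applying the triangle inequality inside the expectation gives
\begin{equation*}
\mathrm{err_{rel}} \leq \mathbb{E}\!\left[\left|\tfrac{1}{n}\sum_i x_i f_i\right|\right] + \mathbb{E}\!\left[\left|\tfrac{1}{n}\sum_i y_i e_i\right|\right] + \mathbb{E}\!\left[\left|\tfrac{1}{n}\sum_i e_i f_i\right|\right].
\end{equation*}
Thus the theorem reduces to bounding each of these three cross-terms by $\sqrt{D_2}$, $\sqrt{D_1}$, and $\sqrt{D_1 D_2}$ respectively.

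Next I would handle each cross-term by a two-step Cauchy--Schwarz argument. At the pathwise level,
\begin{equation*}
\left|\tfrac{1}{n}\sum_i x_i f_i\right| \leq \sqrt{\tfrac{1}{n}\sum_i x_i^2}\,\sqrt{\tfrac{1}{n}\sum_i f_i^2},
\end{equation*}
and then a second Cauchy--Schwarz (for expectations, equivalently Jensen applied to the square) yields
\begin{equation*}
\mathbb{E}\!\left[\sqrt{\tfrac{1}{n}\sum_i x_i^2}\,\sqrt{\tfrac{1}{n}\sum_i f_i^2}\right] \leq \sqrt{\mathbb{E}\!\left[\tfrac{1}{n}\sum_i x_i^2\right] \cdot \mathbb{E}\!\left[\tfrac{1}{n}\sum_i f_i^2\right]}.
\end{equation*}
Using the unit-variance assumption $\mathbb{E}[x_i^2]=1$ and the distortion hypothesis $\mathbb{E}[\tfrac{1}{n}\sum_i f_i^2] \leq D_2$, the first cross-term is bounded by $\sqrt{D_2}$. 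The same template gives $\sqrt{D_1}$ for the $y_i e_i$ term (by symmetry) and $\sqrt{D_1 D_2}$ for the pure-error $e_i f_i$ term, where now both factors come from the distortion constraints. Adding the three contributions gives exactly the claimed bound \eqref{eq:relative err bound}.

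The proof is largely mechanical once the decomposition is in place; I do not foresee a deep obstacle, but two pitfalls are worth being careful about. First, one must not try to pull the absolute value through the sum over $i$ before applying Cauchy--Schwarz, since that would lose the cancellation between samples and force a worse rate in $n$; the two-level Cauchy--Schwarz avoids this by keeping the $\ell^2$ norms of the samples intact. Second, when passing the expectation through the product of square roots, it is essential to use $\mathbb{E}[AB] \leq \sqrt{\mathbb{E}[A^2]\mathbb{E}[B^2]}$ (rather than a naive Jensen bound on each factor separately), because $\sqrt{\cdot}$ is concave and the two factors are in general correlated across samples. With those care points noted, the bound follows cleanly and, importantly, holds for an arbitrary joint distribution $P(x,y)$ and any quantizer satisfying the distortion hypotheses, without requiring Gaussianity or independence between the $x$ and $y$ streams.
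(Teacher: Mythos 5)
Your proof is correct, but it takes a slightly different route than the paper's. You expand $u_iv_i=(x_i-e_i)(y_i-f_i)$ and obtain the symmetric three-term decomposition $\tfrac{1}{n}\sum_i(x_iy_i-u_iv_i)=\tfrac{1}{n}\sum_i x_if_i+\tfrac{1}{n}\sum_i y_ie_i-\tfrac{1}{n}\sum_i e_if_i$, after which each term is dispatched by the two-level Cauchy--Schwarz argument and maps directly onto $\sqrt{D_2}$, $\sqrt{D_1}$, and $\sqrt{D_1D_2}$. The paper instead uses the asymmetric two-term split $x_iy_i-u_iv_i=y_i(x_i-u_i)+u_i(y_i-v_i)$; this leaves a factor $\bigl(\mathbb{E}[\tfrac{1}{n}\sum_i u_i^2]\bigr)^{1/2}$ in the second term, which then has to be controlled by a separate argument showing $\mathbb{E}[\tfrac{1}{n}\sum_i u_i^2]\leq(\sqrt{D_1}+1)^2$ (itself another round of Cauchy--Schwarz plus a quadratic inequality). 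Expanding $(1+\sqrt{D_1})\sqrt{D_2}$ then produces the same three terms. Your version is cleaner in that the three summands of the final bound appear one-to-one from the decomposition and no auxiliary bound on the second moment of the quantized sequence is needed; both arguments use only the triangle inequality and Cauchy--Schwarz, and both hold for arbitrary joint distributions and arbitrary (even block) quantizers, as you correctly note. Your two cautionary remarks --- keeping the $\ell^2$ cancellation intact before taking absolute values, and using $\mathbb{E}[AB]\leq\sqrt{\mathbb{E}[A^2]\mathbb{E}[B^2]}$ rather than bounding the correlated factors separately --- are exactly the points where a careless version of either proof would break.
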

The proof of the theorem is presented in Appendix \ref{appendix:proof of theorem 2}. The above theorem suggests that designing a coding scheme with small reconstruction error yields a desirable relative correlation error. 

\begin{lemma} \label{lemma:estimation err}
	The estimation error in \eqref{eq:estimation error} can be upper bounded as follow
	\begin{equation} \label{eq:estimation error bound}
		\mathrm{err_{est}} \leq \sqrt{\frac{1+\rho^2}{n}} + \mathrm{err_{rel}}.
	\end{equation}
\end{lemma}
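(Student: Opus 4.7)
The plan is to insert the centered quantity $\bar\rho$ between $\rho$ and $\bar\rho_q$ and apply the triangle inequality:
\begin{equation*}
|\rho - \bar\rho_q| \;\leq\; |\rho - \bar\rho| \;+\; |\bar\rho - \bar\rho_q|.
\end{equation*}
Taking expectations (with respect to all four variables $(x_j, x_k, u_j, u_k)$, so that the two terms on the right are dominated by $\mathbb{E}[|\rho-\bar\rho|]$ and $\mathrm{err_{rel}}$ respectively) reduces the lemma to establishing $\mathbb{E}[|\rho - \bar\rho|] \leq \sqrt{(1+\rho^2)/n}$.

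For this bound I would first apply Jensen's inequality to move the absolute value inside a square root:
\begin{equation*}
\mathbb{E}[|\rho - \bar\rho|] \;\leq\; \sqrt{\mathbb{E}[(\rho - \bar\rho)^2]} \;=\; \sqrt{\mathrm{Var}(\bar\rho)},
\end{equation*}
using that $\bar\rho$ is an unbiased estimator of $\rho$ since $\mathbb{E}[x_j^{(i)} x_k^{(i)}] = \rho$. Because the $n$ samples are i.i.d.,
\begin{equation*}
\mathrm{Var}(\bar\rho) = \tfrac{1}{n}\mathrm{Var}(x_j x_k) = \tfrac{1}{n}\bigl(\mathbb{E}[x_j^2 x_k^2] - \rho^2\bigr).
\end{equation*}
The fourth moment can be computed in closed form from Isserlis' theorem for zero-mean jointly Gaussian variables with unit variances and correlation $\rho$: $\mathbb{E}[x_j^2 x_k^2] = 1 + 2\rho^2$. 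Substituting gives $\mathrm{Var}(\bar\rho) = (1+\rho^2)/n$, which yields the claimed bound.

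There is no real obstacle here — every step is either a standard inequality (triangle, Jensen) or a textbook Gaussian moment computation. The only thing to be mildly careful about is identifying which probability measures the two expectations are taken with respect to, so that after the triangle inequality the $\mathbb{E}[|\bar\rho - \bar\rho_q|]$ term matches the definition of $\mathrm{err_{rel}}$ in \eqref{eq:samp err func}. Once that bookkeeping is done, the lemma follows in a few lines.
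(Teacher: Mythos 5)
Your proposal is correct and follows essentially the same route as the paper: the same triangle-inequality decomposition through $\bar\rho$, followed by Jensen's inequality to reduce to $\mathbb{E}[(\rho-\bar\rho)^2] = (1+\rho^2)/n$. The only difference is that you explicitly derive this variance via Isserlis' theorem, whereas the paper states it without computation; your version is simply more complete on that point.
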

\begin{proof}
	Using the triangle inequality, we have
	\begin{align} \label{eq:eq1}
	\mathbb{E}\left[\vert\rho - \bar\rho_q\vert\right] &\leq \mathbb{E}\left[\vert\rho-\bar\rho\vert\right] + \mathbb{E}\left[\vert\bar\rho-\bar\rho_q\vert\right].
	\end{align}
	On the other hand, using the Jensen's inequality and the convexity of the square function, we have
	\begin{equation} \label{eq:eq2}
	\mathbb{E}^2\left[\vert\rho-\bar\rho\vert\right] \leq \mathbb{E}\left[(\rho-\bar\rho)^2\right] = \frac{1+\rho^2}{n}.
	\end{equation}
	By combining \eqref{eq:eq1} and \eqref{eq:eq2}, the bound in \eqref{eq:estimation error bound} is obtained.
\end{proof}


Next, we obtain upper bounds on $ \mathrm{err_{est}}$ and $\mathrm{err_{rel}}$ for our per-symbol quantization method. 
 In this method, we first quantize each sample to a discrete random variable $u \in \mathcal{U}$ where $\vert \mathcal{U} \vert = 2^R$. Thus, each sample $u$ can be encoded by $R$ bits. Assume we have a standard normal random variable $x \sim \mathcal{N}(0, 1)$ which we want to quantize by $R$ bits. To this end, we construct $2^R$ equally probable bins over the real axis and use bins' centroids as the reconstruction points. These centroids are indeed the members of $\mathcal{U}$ which is used for compression of $x$. To compress $x$, we send index of the bin that $x$ belongs to it by $R$ bits.

For the standard normal distribution, $u$ can have a value from the codebook $\mathcal{U} = \{c_1, \cdots, c_{2^R}\}$ where $c_i$ is the centroid of $i$-th bin. Denoting the $i$-th bin interval by $(a_i, a_{i+1})$, its centroid $c_i$ is obtained by
\begin{equation}
c_i = \frac{2^R}{\sqrt{2 \pi}} \left(e^{-a_i^2/2} - e^{a_{i+1}^2/2}\right), \qquad i = 1, \cdots, 2^R.
\end{equation}
Interval boundaries $\{a_i\}$ are obtained as follow. We first set $a_1 = -\infty$. Then, given $a_{i-1}$, we iteratively obtain $a_i$ as the solution of the following equation: 
\begin{align*}
\int_{a_{i-1}}^{a_i} \mathcal{N}(x; 0, 1) dx = 2^{-R}; \quad i=2, \cdots, 2^R+1.
\end{align*}

The expectation of reconstruction error for this coding scheme is obtained as follows,
\begin{align}
\mathbb{E}\left[\left(x - u\right)^2\right] &=  1 + \sigma_u^2 - 2\sum_{i=1}^{2^R} \int_{-\infty}^{+\infty} c_i x p(x) p(u|x) dx	\nonumber \\
&= 1 + \sigma_u^2 - 2\sum_{i=1}^{2^R} \int_{a_i}^{a_{i+1}} c_i x p(x) dx	\nonumber \\
&= 1 + \sigma_u^2 - 2 \cdot 2^{-R} \sum_{i=1}^{2^R} c_i^2	\nonumber \\
&= 1-\sigma_u^2 \label{eq:reconst err per-sym},
\end{align}
where $\sigma_u^2$ is the variance of discrete variable $u$. Evidently, by increasing the bit rate $R$, $\sigma_u^2$ approaches $1$. 
After encoding the normal variables using the above method, the central machine estimates the correlation between $x_j$ and $x_k$ for any $(j,k) \in \mathcal{V}^2$ using \eqref{eq:rho estimate quantize}. Then, it estimates $\rho^2$ by substituting $\bar\rho_q$ in \eqref{eq:rho^2 hat}. Finally, by applying the Chow-Liu algorithm, the structure of underlying GGM is estimated.

According to Theorem \ref{thm:distortion} and the reconstruction error in \eqref{eq:reconst err per-sym}, the relative correlation estimation error in \eqref{eq:samp err func} is upper bounded by
\begin{equation}
\mathrm{err_{rel}} \leq 2 \sqrt{1-\sigma_u^2} + 1 - \sigma_u^2.
\end{equation}
The variance $\sigma_u^2$ is a function of the bit rate $R$, but it does not have an explicit closed form expression.
By incorporating Lemma \ref{lemma:estimation err}, the estimation error of per-symbol quantizer is upper bounded by
\begin{equation} \label{eq:estimator err per symbol}
\mathrm{err_{est}} \leq 2\sqrt{1-\sigma_u^2} + 1 - \sigma_u^2 + \sqrt{\frac{1+\rho^2}{n}}.
\end{equation}

In Section \ref{sec:experiments}, we will evaluate the performance of this method for structure estimation on real and synthetic datasets.

\section{Experiments} \label{sec:experiments}
In this section, we evaluate the performance of our proposed structure learning methods empirically on some synthetic and real-world datasets. The results show that quantizing samples to few bits is enough to estimate the underlying structure of the model. In all the experiments, double-precision floating-points ($64$ bit) are used for the original data.

\subsection{Synthetic Data}
Synthetic data are generated from a random tree with $d$ nodes. Then, a random weight is assigned to each edge of the tree which corresponds to the correlation coefficient between endpoint variables  of the edge. The correlation coefficient between any pair of variables which are not neighbor, can be obtained by \eqref{eq:corr decay}. Thus, using this weighted tree, the covariance matrix of the GGM is obtained and $n$ i.i.d. samples are drawn from the underlying normal distribution. Finally, dimensions of the data are distributed among $d$ machines.

In \figurename{}~\ref{fig:tree err vs n and R}, the performance of the sign method and per-symbol quantization for different values of $n$ and $R$ is plotted. In this experiment, the underlying GGM has $20$ variables. To approximate the probability of error, for each sample size $n$, we run the algorithms $1000$ times and count the number of incorrect estimated trees. The experiment shows that recovering the structure from the sign method yields better performance than the per-symbol method for $R = 1$. Interestingly,  the error of $4$-bit per-symbol is very close to the non-quantized (original data) curve. This means that the accuracy of correlation estimation using $4$-bit quantized data is sufficient to achieve the centralized performance in structure estimation.

%

\begin{figure}[t]
	\centering
	\includegraphics[scale=.5]{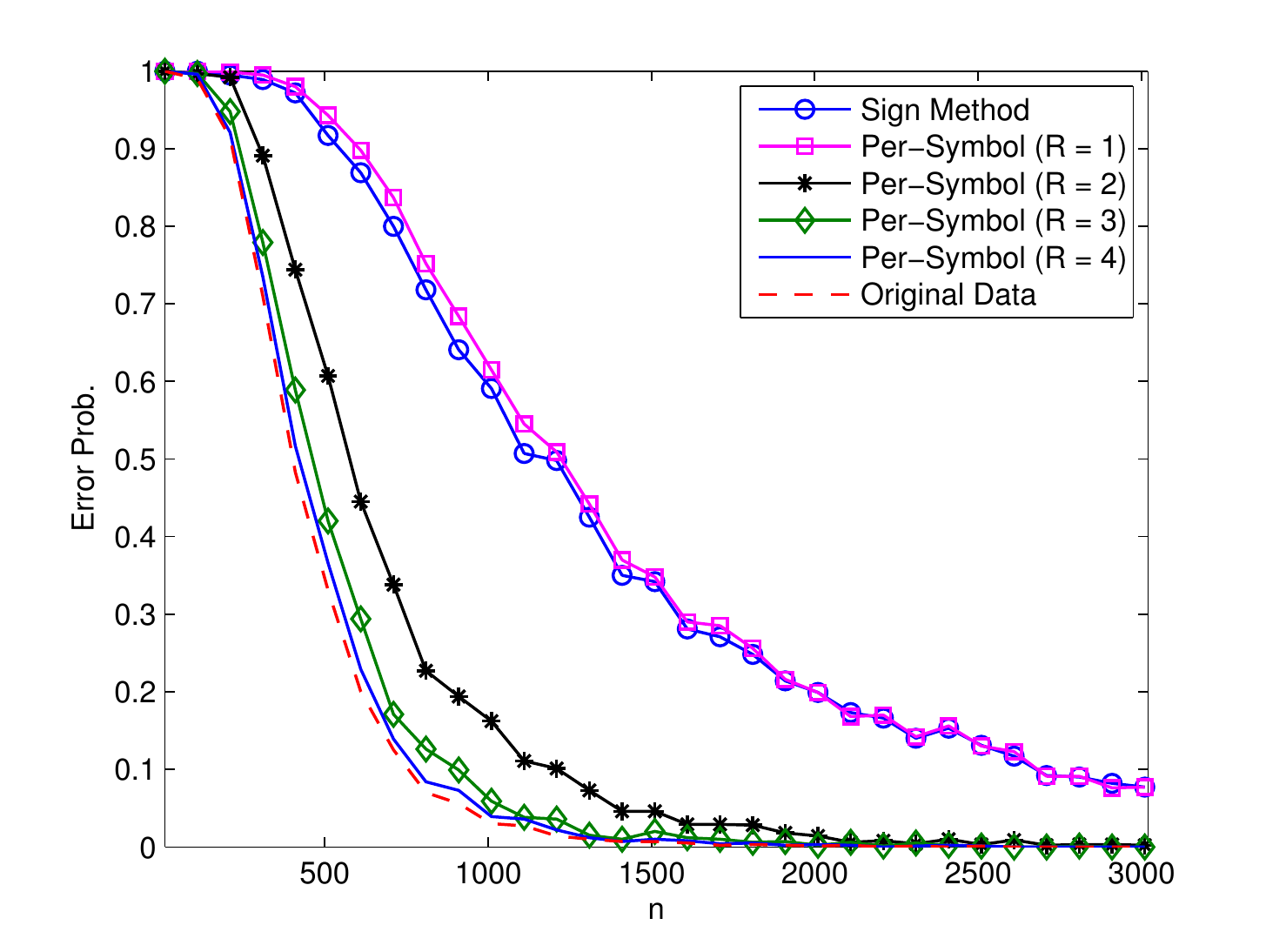}
	\caption{The structure estimation error for different values of $R$ and $n$. Here, the random GGM has $20$ nodes.}
	\label{fig:tree err vs n and R}	
\end{figure}



\subsubsection{Evaluation of the Error Bounds}
We first focus on a simple tree with three nodes and correlations coefficients $\rho_1 = 0.9$ and $ \rho_2=0.1,$ as depicted in \figurename{}~\ref{fig:crossover}. A crossover event happens if the estimate of the mutual information associated to $e'$ exceeds that of $e$. To evaluate the error bounds of this event obtained in Lemma \ref{lemma: crossover event chernoff} and Lemma \ref{lemma: crossover event hoeffding}, we also provide the exact error which can be calculated by a brute force summation of the tail probability of $\Pr(\hat\theta_e \leq \hat \theta_{e'})$. \figurename{}~\ref{fig:crossover-err} depicts the probability of crossover event versus the number of samples. As can be seen, the upper bound of Lemma \ref{lemma: crossover event chernoff} converges to the exact error faster than the bound of Lemma \ref{lemma: crossover event hoeffding}. 

In \figurename{}~\ref{fig:crossover-err-exponent}, the exponent of exact error, Chernoff bound (Lemma \ref{lemma: crossover event chernoff}) and Hoeffding bound (Lemma \ref{lemma: crossover event hoeffding}) for the structure of \figurename{}~\ref{fig:crossover}  are compared. In the figure, the quantity $-\frac{1}{n} \ln\Pr(\hat\theta_{e} \leq \hat\theta_{e'})$ is plotted for various sample sizes.  As stated in Lemma \ref{lemma: crossover event chernoff}, the bound obtained based on Chernoff bound is tight in the exponent. However, Hoeffding bound is not tight in this case.

\begin{figure}
	\centering
	\includegraphics[scale=0.8]{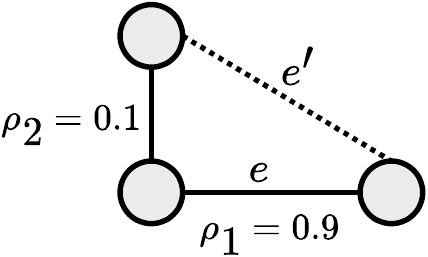}
	\caption{A sample sub-tree of three nodes for evaluation of the crossover error bound.}
	\label{fig:crossover}
\end{figure}

\begin{figure}
	\centering
	\includegraphics[width=\linewidth]{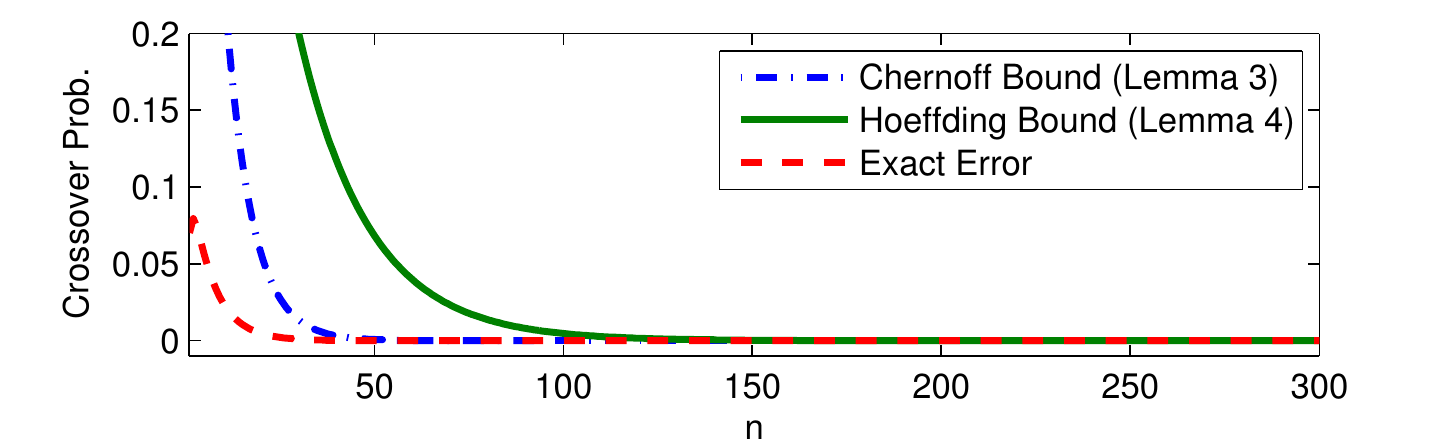}
	\caption{Probability of the crossover event for $e$ and $e'$ in \figurename{}~\ref{fig:crossover}.}
	\label{fig:crossover-err}
\end{figure}

\begin{figure}
	\centering
	\includegraphics[width=\linewidth]{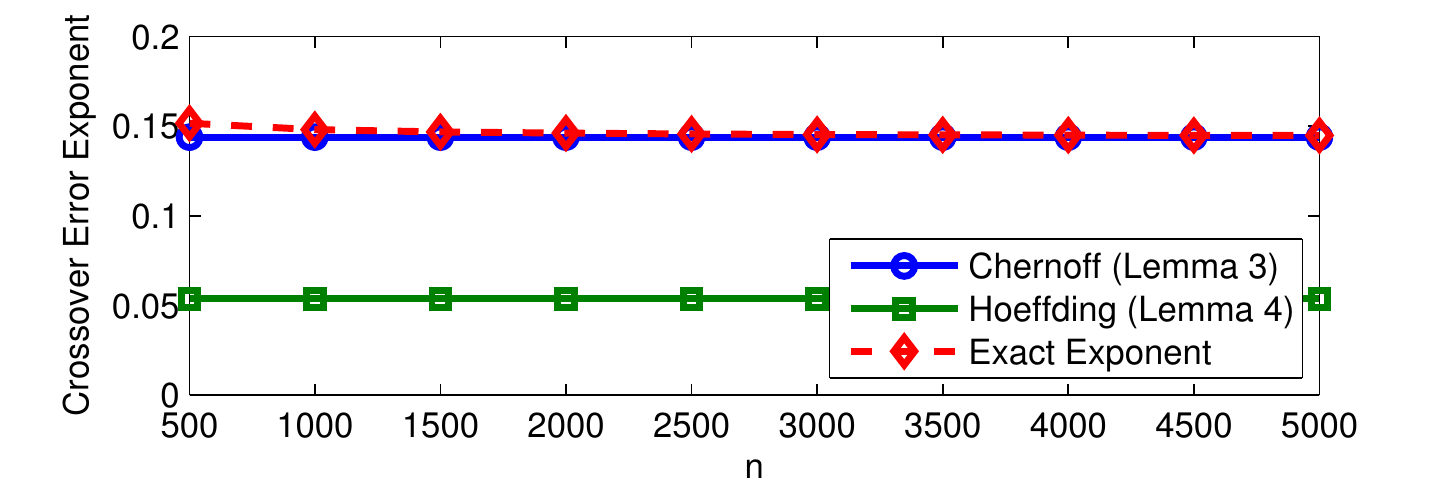}
	\caption{The crossover error exponent for $e$ and $e'$ in \figurename{}~\ref{fig:crossover}.}
	\label{fig:crossover-err-exponent}
\end{figure}

\figurename{}~\ref{fig:tree-err-bnd} shows the probability of incorrect recovery of the tree structure using the sign method as a function of sample size. We have used  a star structured tree with $20$ nodes and correlations of $0.5$ which due to Remark \ref{remark:tightness of star structure} is the worst structure. 

In \figurename{} \ref{fig:relative-err-exponent}, the exponent of the bound of Theorem \ref{thm:distortion} on $\mathrm{err_{rel}}$ is plotted as a function of bit rate $R$. In this experiment, the true correlation coefficient is $\rho=0.5$ and the sample size is $n=1000$. The empirical error curve is obtained by averaging over 1000 runs. In the figure, the $y$ axis represents the quantity $-\frac{1}{R} \ln (\mathrm{err_{rel}})$. As can be seen from the figure, the upper bound is not tight in the exponent for Gaussian data. Note that the error bound in Theorem \ref{thm:distortion} is valid for any distribution and any quantization method.




\begin{figure}
	\centering
	\includegraphics[width=\linewidth]{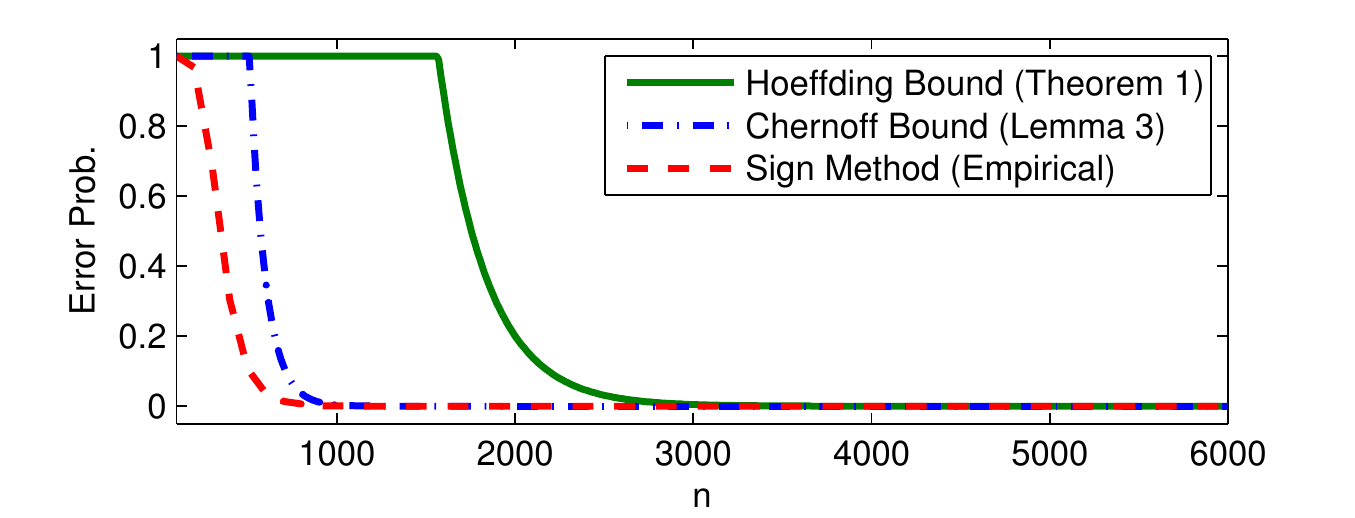}
	\caption{Probability of incorrect tree estimation for the star structure.}
	\label{fig:tree-err-bnd}
\end{figure}

\begin{figure}
	\centering
	\includegraphics[width=\linewidth]{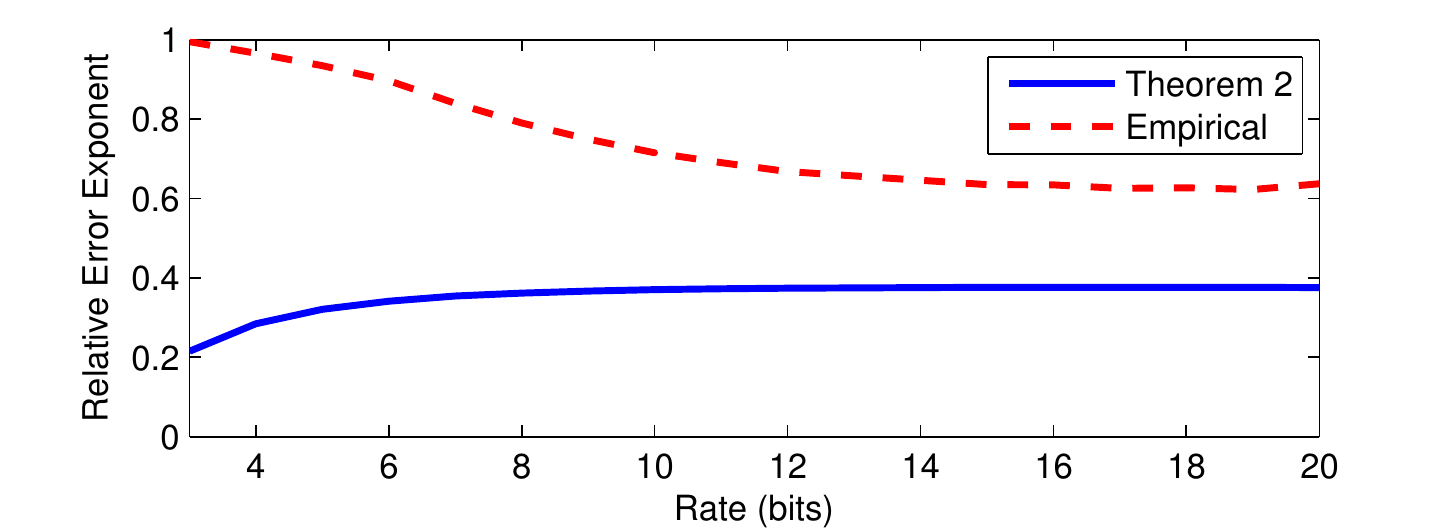}
	\caption{Relative error exponent of per-symbol method.}
	\label{fig:relative-err-exponent}
\end{figure}

\subsubsection{Quality versus Quantity} \label{sec:quality vs quantity}
Generally, quantization of data samples decreases the accuracy of any parameter estimation. The quantization of all samples, which is considered so far, may not be the best strategy for decreasing the communication complexity. In fact, if the budget of total number of transmission bits is fixed, then one may want  to sub-sample from the local datasets and allocate the available bits to these samples and discard the rest.

Based on the bound in \eqref{eq:estimator err per symbol} and through an experiment, we show that there is a trade-off between the quality of quantized samples and the size of sub-sampled data to achieve the best performance.

For example, assume that the budget of communication is $K=1000$ bits and the number of local data samples is $n=1000$. This means each local machine can, for instance, transmit $1000$ samples which are quantized by $1$ bit. However, machines can select the first $500$ samples and quantize them to $2$ bits. Which method is better in the sense of minimizing the estimation error $\mathrm{err_{est}}$?

To answer the question, we have simulated our proposed algorithm with $K=1000$ and $n=1000$. In \figurename{}~\ref{fig:quality vs quantity}, the estimation error in \eqref{eq:estimation error} for the estimator in \eqref{eq:rho estimate quantize} is plotted for various bit rates. In this experiment, the true value of correlation is $0.5$. As can be seen from the figure, the error is minimized when $R=4$ bits are used for quantization which is correspond to the sub-sampling size of $250$. The figure also shows that the estimation with large number of highly distorted (quantized) samples is inefficient as well as estimating with low number of high quality samples. In fact, this experiment suggests that in situations with large local datasets and limited communication cost, the optimal strategy is to quantize some portion of the  whole dataset with an acceptable distortion. The upper bound in \eqref{eq:estimator err per symbol} for the per-symbol scheme is also plotted for the sake of comparison.

\begin{figure}
	\centering
	\includegraphics[scale=.5]{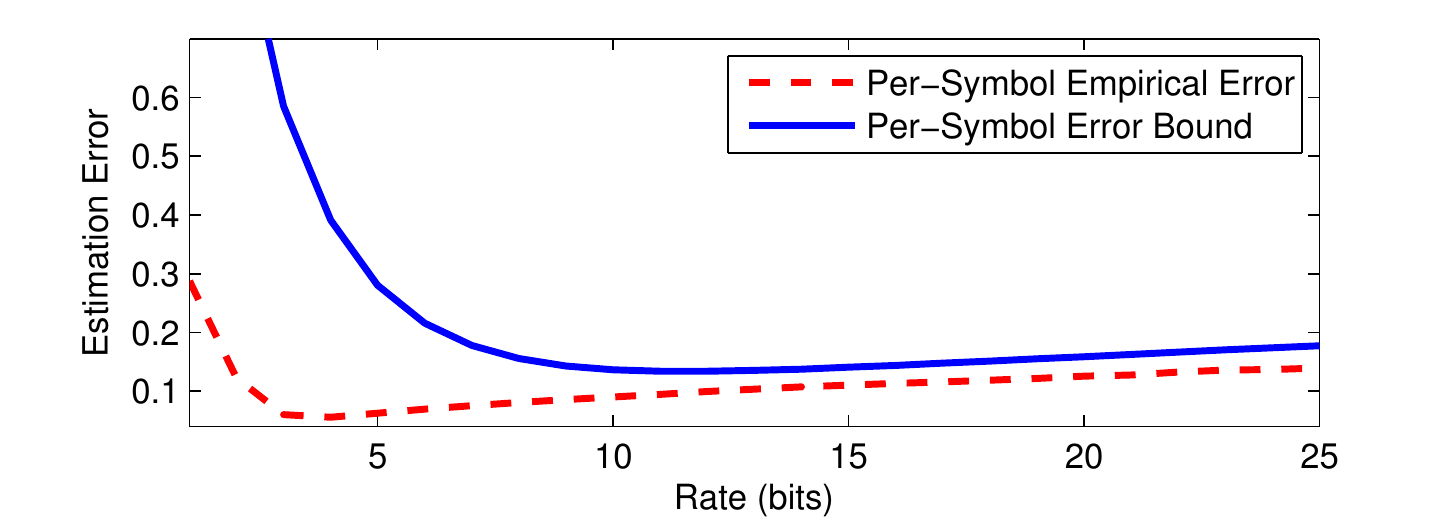}
	\caption{The mean absolute error of correlation estimation. The total communication cost for each machine is limited to $K=1000$ bits.}
	\label{fig:quality vs quantity}
\end{figure}

\subsection{Real-World Dataset: Skeleton Recovering}
To assess our methods on real datasets, the MAD\footnote{Multi-modal Action Database (available at: \url{http://www.humansensing.cs.cmu.edu/mad})} dataset is used. This dataset  is designed for human activity recognition and event detection in the computer vision area \cite{huang2014sequential, han2017space}. The MAD dataset is generated by a Microsoft Kinect sensor in indoor environment. In the dataset, there are $20$ sensors attached to $20$ joints of the human body. Each sensor records $3$D coordinate of its corresponding joint while the subject does an activity. The MAD dataset has three modalities includes RGB video, $3$D depth, and skeleton ($3$D coordinate of the joints). In this experiment, we have used the skeleton modality.

In the dataset, $20$ subjects perform $35$ different actions (e.g. jumping, walking, running, etc.) and each subject repeats the actions twice. Finally, the skeleton dataset contains $243586$ $3$D coordinates per joint.

We assume that the skeleton dataset follows from a tree-structured GGM which its structure is identical to the human body skeleton as depicted in \figurename{}~\ref{fig:human body skeleton}-(a). This assumption intuitively makes sense for such a dataset. Gaussian assumption for similar datasets are proposed in \cite{damianou2013deep} and \cite{lehrmann2013non}.

In this experiment, the skeleton dataset is quantized to several bit rates using the proposed per-symbol quantizer. \figurename{}~\ref{fig:human body skeleton} shows the results for bit rates $1, 3, 5$, and $6$ bits. Applying the Chow-Liu algorithm on $x$ dimension of the original (non-quantized) data, perfectly recovers the body skeleton. Quantizing the $x$ dimension to $1$ bit (using per-symbol and the sign method) results in merely two disagreement edges as showed in \figurename{}~\ref{fig:human body skeleton}-(b). Quantizing to $3$ bits has only one incorrect edge and quantizing to $6$ bits recovers the body skeleton perfectly.

A similar experiment is performed on the $z$ dimension. The results are depicted in \figurename{}~\ref{fig:human body skeleton z}. Interestingly, the $z$ dimension does not follow a tree structured GGM  even for the  case where the original data is available. However, as the bit rate increases the original structure can be recovered reliably.  We have not presented the experiment on the $y$ dimension here. This is due to the fact that the structure inferred from the original data has no relation with the human skeleton.

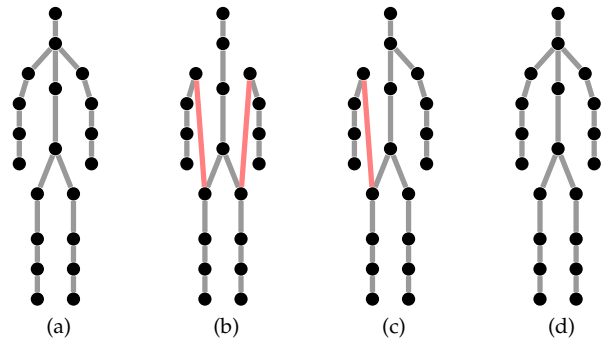
\begin{figure}
	\centering
	\begin{minipage}[b]{.24\linewidth}
		\centering
		\subfloat[]{\begin{tikzpicture}[scale=.4, auto,swap]
	\foreach \pos/\name in {{(0, 0)/1},
		{(0, 2)/2},
		{(0, 3.5)/3},
		{(0, 4.5)/4},
		{(.9, 2.5)/5},
		{(1.2, 1.5)/6},
		{(1.2, .5)/7},
		{(1.2, -.5)/8},
		{(-.9, 2.5)/9},
		{(-1.2, 1.5)/10},
		{(-1.2, .5)/11},
		{(-1.2, -.5)/12},
		{(.6, -1.5)/13},
		{(.6, -3)/14},
		{(.6, -4)/15},
		{(.6, -5)/16},
		{(-.6, -1.5)/17},
		{(-.6, -3)/18},
		{(-.6, -4)/19},
		{(-.6, -5)/20}}
	\node[vertex] (\name) at \pos {};
	\foreach \source/ \dest in {1/2, 2/3, 3/4, 3/5, 5/6, 6/7, 7/8,
		3/9, 9/10, 10/11, 11/12, 1/13, 13/14, 14/15,15/16, 1/17, 17/18,18/19, 19/20}
	\path[edge] (\source) -- (\dest);
	
\end{tikzpicture}}  
		\label{subfig:true skeleton}
	\end{minipage}		
	\begin{minipage}[b]{.24\linewidth}
		\centering
		\subfloat[]{\begin{tikzpicture}[scale=.4, auto,swap]
	\foreach \pos/\name in {{(0, 0)/1},
		{(0, 2)/2},
		{(0, 3.5)/3},
		{(0, 4.5)/4},
		{(.9, 2.5)/5},
		{(1.2, 1.5)/6},
		{(1.2, .5)/7},
		{(1.2, -.5)/8},
		{(-.9, 2.5)/9},
		{(-1.2, 1.5)/10},
		{(-1.2, .5)/11},
		{(-1.2, -.5)/12},
		{(.6, -1.5)/13},
		{(.6, -3)/14},
		{(.6, -4)/15},
		{(.6, -5)/16},
		{(-.6, -1.5)/17},
		{(-.6, -3)/18},
		{(-.6, -4)/19},
		{(-.6, -5)/20}}
	\node[vertex] (\name) at \pos {};
	\foreach \source/ \dest in {1/2, 2/3, 3/4, 5/6, 6/7, 7/8,
		9/10, 10/11, 11/12, 1/13, 13/14, 14/15,15/16, 1/17, 17/18,18/19, 19/20}
	\path[edge] (\source) -- (\dest);

	\path[false edge] (17) -- (9);
	\path[false edge] (13) -- (5);
\end{tikzpicture}}  
		\label{subfig:1 bit skeleton}
	\end{minipage}		
	\begin{minipage}[b]{.24\linewidth}
		\centering
		\subfloat[]{\begin{tikzpicture}[scale=.4, auto,swap]
	\foreach \pos/\name in {{(0, 0)/1},
		{(0, 2)/2},
		{(0, 3.5)/3},
		{(0, 4.5)/4},
		{(.9, 2.5)/5},
		{(1.2, 1.5)/6},
		{(1.2, .5)/7},
		{(1.2, -.5)/8},
		{(-.9, 2.5)/9},
		{(-1.2, 1.5)/10},
		{(-1.2, .5)/11},
		{(-1.2, -.5)/12},
		{(.6, -1.5)/13},
		{(.6, -3)/14},
		{(.6, -4)/15},
		{(.6, -5)/16},
		{(-.6, -1.5)/17},
		{(-.6, -3)/18},
		{(-.6, -4)/19},
		{(-.6, -5)/20}}
	\node[vertex] (\name) at \pos {};
	\foreach \source/ \dest in {1/2, 2/3, 3/4, 3/5, 5/6, 6/7, 7/8,
		9/10, 10/11, 11/12, 1/13, 13/14, 14/15,15/16, 1/17, 17/18,18/19, 19/20}
	\path[edge] (\source) -- (\dest);
	
	\path[false edge] (17) -- (9);
	
\end{tikzpicture}}  
		\label{subfig:3 bit skeleton}
	\end{minipage}
	\begin{minipage}[b]{.24\linewidth}
		\centering
		\subfloat[]{\begin{tikzpicture}[scale=.4, auto,swap]
	\foreach \pos/\name in {{(0, 0)/1},
		{(0, 2)/2},
		{(0, 3.5)/3},
		{(0, 4.5)/4},
		{(.9, 2.5)/5},
		{(1.2, 1.5)/6},
		{(1.2, .5)/7},
		{(1.2, -.5)/8},
		{(-.9, 2.5)/9},
		{(-1.2, 1.5)/10},
		{(-1.2, .5)/11},
		{(-1.2, -.5)/12},
		{(.6, -1.5)/13},
		{(.6, -3)/14},
		{(.6, -4)/15},
		{(.6, -5)/16},
		{(-.6, -1.5)/17},
		{(-.6, -3)/18},
		{(-.6, -4)/19},
		{(-.6, -5)/20}}
	\node[vertex] (\name) at \pos {};
	\foreach \source/ \dest in {1/2, 2/3, 3/4, 3/5, 5/6, 6/7, 7/8,
		3/9, 9/10, 10/11, 11/12, 1/13, 13/14, 14/15,15/16, 1/17, 17/18,18/19, 19/20}
	\path[edge] (\source) -- (\dest);
	
\end{tikzpicture}}  
		\label{subfig:6 bit skeleton}
	\end{minipage}
	\caption{Structure learning of the human body skeleton on $x$ dimension of the MAD dataset. (a) The true human body skeleton. The estimated structures using quantized data with rates $1$, $3$, and $6$ bits are shown in (b), (c), and (d), respectively. The skeleton of (b) is obtained by both signs and $1$-bit per-symbol methods.}
	\label{fig:human body skeleton}
\end{figure}

\begin{figure}
	\centering
	\begin{minipage}[b]{.24\linewidth}
		\centering
		\subfloat[]{\begin{tikzpicture}[scale=.4, auto,swap]
	\foreach \pos/\name in {{(0, 0)/1},
		{(0, 2)/2},
		{(0, 3.5)/3},
		{(0, 4.5)/4},
		{(.9, 2.5)/5},
		{(1.2, 1.5)/6},
		{(1.2, .5)/7},
		{(1.2, -.5)/8},
		{(-.9, 2.5)/9},
		{(-1.2, 1.5)/10},
		{(-1.2, .5)/11},
		{(-1.2, -.5)/12},
		{(.6, -1.5)/13},
		{(.6, -3)/14},
		{(.6, -4)/15},
		{(.6, -5)/16},
		{(-.6, -1.5)/17},
		{(-.6, -3)/18},
		{(-.6, -4)/19},
		{(-.6, -5)/20}}
	\node[vertex] (\name) at \pos {};
	\foreach \source/ \dest in {1/2, 3/2, 2/18, 2/14, 3/4, 3/5, 5/6, 6/7, 7/8,
		3/9, 9/10, 10/11, 11/12, 1/13, 14/15,15/16, 1/17,18/19, 19/20}
	\path[edge] (\source) -- (\dest);
	
	\path[false edge] (2) -- (18);
	\path[false edge] (2) -- (14);
	
\end{tikzpicture}}  
		\label{subfig:org data z}
	\end{minipage}		
	\begin{minipage}[b]{.24\linewidth}
		\centering
		\subfloat[]{\begin{tikzpicture}[scale=.40, auto,swap]
\foreach \pos/\name in {{(0, 0)/1},
	{(0, 2)/2},
	{(0, 3.5)/3},
	{(0, 4.5)/4},
	{(.9, 2.5)/5},
	{(1.2, 1.5)/6},
	{(1.2, .5)/7},
	{(1.2, -.5)/8},
	{(-.9, 2.5)/9},
	{(-1.2, 1.5)/10},
	{(-1.2, .5)/11},
	{(-1.2, -.5)/12},
	{(.6, -1.5)/13},
	{(.6, -3)/14},
	{(.6, -4)/15},
	{(.6, -5)/16},
	{(-.6, -1.5)/17},
	{(-.6, -3)/18},
	{(-.6, -4)/19},
	{(-.6, -5)/20}}
\node[vertex] (\name) at \pos {};
\foreach \source/ \dest in {1/2,3/2, 2/18, 18/14, 3/4, 2/5, 5/6, 6/7, 7/8,
	2/9, 9/10, 10/11, 11/12, 1/13, 14/15,15/16, 1/17,18/19, 19/20}
\path[edge] (\source) -- (\dest);

\path[false edge] (2) -- (18);
\path[false edge] (18) -- (14);
\path[false edge] (5) -- (2);
\path[false edge] (2) -- (9);

\end{tikzpicture}}  
		\label{subfig:1 bit skeleton signs z}
	\end{minipage}		
	\begin{minipage}[b]{.24\linewidth}
		\centering
		\subfloat[]{\begin{tikzpicture}[scale=.40, auto,swap]
\foreach \pos/\name in {{(0, 0)/1},
	{(0, 2)/2},
	{(0, 3.5)/3},
	{(0, 4.5)/4},
	{(.9, 2.5)/5},
	{(1.2, 1.5)/6},
	{(1.2, .5)/7},
	{(1.2, -.5)/8},
	{(-.9, 2.5)/9},
	{(-1.2, 1.5)/10},
	{(-1.2, .5)/11},
	{(-1.2, -.5)/12},
	{(.6, -1.5)/13},
	{(.6, -3)/14},
	{(.6, -4)/15},
	{(.6, -5)/16},
	{(-.6, -1.5)/17},
	{(-.6, -3)/18},
	{(-.6, -4)/19},
	{(-.6, -5)/20}}
\node[vertex] (\name) at \pos {};
\foreach \source/ \dest in {1/2,3/2, 2/18, 2/14, 3/4, 13/5, 5/6, 6/7, 7/8,
	2/9, 9/10, 10/11, 11/12, 1/13, 14/15,15/16, 1/17,18/19, 19/20}
\path[edge] (\source) -- (\dest);

\path[false edge] (2) -- (18);
\path[false edge] (2) -- (14);
\path[false edge] (5) -- (13);
\path[false edge] (2) -- (9);

\end{tikzpicture}}  
		\label{subfig:1 bit skeleton per-symb z}
	\end{minipage}
	\begin{minipage}[b]{.24\linewidth}
		\centering
		\subfloat[]{\begin{tikzpicture}[scale=.4, auto,swap]
	\foreach \pos/\name in {{(0, 0)/1},
		{(0, 2)/2},
		{(0, 3.5)/3},
		{(0, 4.5)/4},
		{(.9, 2.5)/5},
		{(1.2, 1.5)/6},
		{(1.2, .5)/7},
		{(1.2, -.5)/8},
		{(-.9, 2.5)/9},
		{(-1.2, 1.5)/10},
		{(-1.2, .5)/11},
		{(-1.2, -.5)/12},
		{(.6, -1.5)/13},
		{(.6, -3)/14},
		{(.6, -4)/15},
		{(.6, -5)/16},
		{(-.6, -1.5)/17},
		{(-.6, -3)/18},
		{(-.6, -4)/19},
		{(-.6, -5)/20}}
	\node[vertex] (\name) at \pos {};
	\foreach \source/ \dest in {1/2, 3/2, 2/18, 2/14, 3/4, 3/5, 5/6, 6/7, 7/8,
		3/9, 9/10, 10/11, 11/12, 1/13, 14/15,15/16, 1/17,18/19, 19/20}
	\path[edge] (\source) -- (\dest);
	
	\path[false edge] (2) -- (18);
	\path[false edge] (2) -- (14);
	
\end{tikzpicture}}  
		\label{subfig:6 bit skeleton z}
	\end{minipage}
	\caption{Structure learning of the human body skeleton on $z$ dimension of the MAD dataset. The recovered structure using the original data is shown in (a) . The estimated structures using quantized data with $1$-bit sign method, $1$-bit per-symbol method, and $7$-bit per-symbol are shown in (b), (c), and (d), respectively.}
	\label{fig:human body skeleton z}
\end{figure}
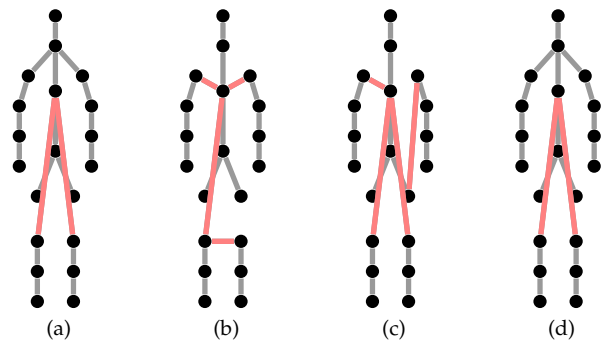

\section{Conclusion}\label{sec:conclusion}
In this paper, we have studied the structure learning of tree-structured GGMs on distributed datasets. Due to communication constraints, we have jointly designed quantization and learning algorithms to achieve high accuracy in inferring the underlying tree structure. In particular, we have proposed two methods for compressing the local datasets: sign method and per-symbol quantizer.

Being rather simple and intuitive, the experimental results show that the per-symbol quantizer yields high accuracy for structure estimation by spending a few bits per sample. Pushing down the number of bits per sample to one, we have proved through experiments and analytical reasoning that even in this case, one can obtain exponentially decaying error probabilities in structure learning.

Our result can be extended in several directions. For instance, the tree structure can be generalized to sparse structures where sparse learning methods such as glasso over the quantized data might be crucial. As another extension, one can study and solve a similar problem on discrete variables with sparse MRFs. Removing the central machine and allowing communication between local machines change the problem significantly and it worths of investigations.


\appendices
\section{Proof of Theorem \ref{thm:distortion}} \label{appendix:proof of theorem 2}
Since variables $x$ and $y$ have unit variances, we have $0 \leq D_1,D_2 \leq 1$.
The relative correlation  error can be upper bounded as follows
\begin{align}
\mathrm{err_{rel}} &=\mathbb{E} \left[\left\vert \frac{1}{n} \sum_{i=1}^{n} x_i y_i - \frac{1}{n} \sum_{i=1}^{n} u_i v_i \right\vert \right] \nonumber\\
&=\mathbb{E} \left[\left\vert \frac{1}{n} \sum_{i=1}^{n}  y_i \left(x_i - u_i\right) + \frac{1}{n} \sum_{i=1}^{n} u_i \left(y_i-v_i\right) \right\vert \right]	\nonumber\\
& \leq \mathbb{E} \left[\left\vert \frac{1}{n} \sum_{i=1}^{n}  y_i \left(x_i - u_i\right) \right\vert \right] + \mathbb{E} \left[\left\vert \frac{1}{n} \sum_{i=1}^{n} u_i \left(y_i-v_i\right) \right\vert \right].	\nonumber
\end{align}
Using the Cauchy-Schwartz inequality, we obtain 
\begin{align}
\mathrm{err_{rel}} & \stackrel{(a)}{\leq} \mathbb{E} \left[\left( \frac{1}{n} \sum_{i=1}^{n}  y_i^2 \right)^{1/2} \left(\frac{1}{n}\sum_{i=1}^{n}\left(x_i - u_i\right)^2 \right)^{1/2} \right]  +\nonumber \\
&\qquad\quad \mathbb{E} \left[\left( \frac{1}{n} \sum_{i=1}^{n}  u_i^2 \right)^{1/2} \left(\frac{1}{n}\sum_{i=1}^{n}\left(y_i - v_i\right)^2 \right)^{1/2} \right]	\nonumber\\
& \stackrel{(b)}{\leq} \left( \mathbb{E} \left[\frac{1}{n} \sum_{i=1}^{n}  y_i^2 \right] \right)^{1/2} \left(\mathbb{E} \left[\frac{1}{n}\sum_{i=1}^{n}\left(x_i - u_i\right)^2 \right] \right)^{1/2} + \nonumber \\
&\qquad\quad \left( \mathbb{E} \left[\frac{1}{n} \sum_{i=1}^{n}  u_i^2 \right] \right)^{1/2} \left(\mathbb{E} \left[\frac{1}{n}\sum_{i=1}^{n}\left(y_i - v_i\right)^2 \right] \right)^{1/2}	\nonumber\\
&\leq \sqrt{\mathbb{E}\left[y^2\right] D_1} + \sqrt{D_2 ~ \mathbb{E} \left[\frac{1}{n} \sum_{i=1}^{n}  u_i^2 \right]}.	\label{eq:distortion upper bound middle}
\end{align}
On the other hand, we have
\begin{align*}
D_1 &\geq \mathbb{E} \left[\frac{1}{n}\sum_{i=1}^{n}\left(x_i - u_i\right)^2 \right] \\
&= 1 + \mathbb{E} \left[\frac{1}{n} \sum_{i=1}^{n}  u_i^2 \right] - 2 \mathbb{E}\left[\frac{1}{n} \sum_{i=1}^{n} x_i u_i\right]	\\
&\geq 1 + \mathbb{E} \left[\frac{1}{n} \sum_{i=1}^{n}  u_i^2 \right] - 2 \mathbb{E}\left[\left(\frac{1}{n}\sum_{i=1}^{n}x_i^2\right)^{1/2} \left(\frac{1}{n}\sum_{i=1}^{n}u_i^2\right)^{1/2}\right]	\\
&\geq 1 + \mathbb{E} \left[\frac{1}{n} \sum_{i=1}^{n}  u_i^2 \right] -  2 \left(\mathbb{E}\left[\frac{1}{n}\sum_{i=1}^{n}x_i^2 \right]  \mathbb{E}\left[\frac{1}{n}\sum_{i=1}^{n}u_i^2 \right] \right)^{1/2}	\\
&= \left(\sqrt{\mathbb{E}\left[\frac{1}{n}\sum_{i=1}^{n}u_i^2 \right]} - 1 \right)^2.
\end{align*}
Hence,
\begin{equation}
\mathbb{E}\left[\frac{1}{n}\sum_{i=1}^{n}u_i^2 \right] \leq \left(\sqrt{D_1} + 1 \right)^2.
\end{equation}
By substituting the above bound into \eqref{eq:distortion upper bound middle}, we obtain
\begin{equation}	\label{eq:distortion upper bound 1}
\mathrm{err_{rel}} \leq \sqrt{D_1} + \sqrt{D_2} + \sqrt{D_1 D_2},
\end{equation}
which completes the proof of Theorem \ref{thm:distortion}.

\ifCLASSOPTIONcompsoc
\else
\fi


\ifCLASSOPTIONcaptionsoff
  \newpage
\fi


\bibliographystyle{IEEEtran}
\bibliography{IEEEabrv,./refs}

%

%
\begin{IEEEbiography}[{\includegraphics[width=1in,height=1.25in,clip,keepaspectratio]{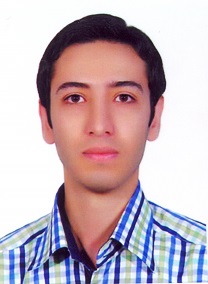}}]{Mostafa Tavassolipour}
	received the B.Sc. degree from Shahed University, Tehran, Iran, in 2009, and the M.Sc. degree from Computer Engineering department of Sharif University of Technology (SUT), Tehran, Iran, in 2011. Currently, He is a Ph.D. student of Artificial Intelligence program at Computer Engineering Department of Sharif University of Technology. His research interests include machine learning, image processing, information theory, content based video analysis, and bioinformatics.
\end{IEEEbiography}
\begin{IEEEbiography}[{\includegraphics[width=1in,height=1.25in,clip,keepaspectratio]{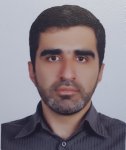}}]{Seyed Abolfazl Motahari}
	is an assistant professor at Computer Engineering Department of Sharif University of Technology (SUT). He received his B.Sc. degree from the Iran University of Science and Technology (IUST), Tehran, in 1999, the M.Sc. degree from Sharif University of Technology, Tehran, in 2001, and the Ph.D. degree from University of Waterloo, Waterloo, Canada, in 2009, all in electrical engineering. From August 2000 to August 2001, he was a Research Scientist with the Advanced Communication Science Research Laboratory, Iran Telecommunication Research Center (ITRC), Tehran. From October 2009 to September 2010, he was a Postdoctoral Fellow with the University of Waterloo, Waterloo. From September 2010 to July 2013, he was a Postdoctoral Fellow with the Department of Electrical Engineering and Computer Sciences, University of California at Berkeley. His research interests include multiuser information theory and Bioinformatics. He received several awards including Natural Science and Engineering Research Council of Canada (NSERC) Post-Doctoral Fellowship.
\end{IEEEbiography}
\begin{IEEEbiography}[{\includegraphics[width=1in,height=1.25in,clip,keepaspectratio]{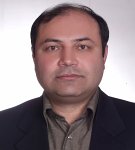}}]{Mohammad Taghi Manzuri Shalmani}
	received the B.Sc. and M.Sc.
	in electrical engineering from Sharif University of
	Technology (SUT), Iran, in 1984 and 1988, respectively.
	He received the Ph.D. degree in electrical and
	computer engineering from the Vienna University
	of Technology, Austria, in 1995. Currently, he is an
	associate professor in the Computer Engineering
	Department, Sharif University of Technology,
	Tehran, Iran. His main research interests include
	digital signal processing, stochastic modeling, and
	Multi-resolution signal processing.
\end{IEEEbiography}




\end{document}